\begin{document}

\title{Individualized PATE: Differentially Private Machine Learning with Individual Privacy Guarantees}
\titlenote{Accepted at \textit{23rd Privacy Enhancing Technologies Symposium}~(PETS~2023).}

\author{Franziska Boenisch}\authornote{The work was done while the author was at Fraunhofer AISEC.}\authornote{Both authors contributed equally.}
\email{franziska.boenisch@vectorinstitute.ai}
\affiliation{%
  \institution{Vector Institute}
    \city{Toronto}
  \country{Canada}
}

\author{Christopher Mühl}\authornotemark[3]
\email{christopher.muehl@fu-berlin.de}
\affiliation{%
  \institution{Free University Berlin}
      \city{Berlin}
  \country{Germany}
}
\author{Roy Rinberg} 
\email{roy.rinberg@columbia.edu}
\affiliation{%
  \institution{Columbia University}
        \city{New York}
  \country{USA}
}
\author{Jannis Ihrig}
\email{jannis.ihrig@fu-berlin.de}
\affiliation{%
  \institution{Free University Berlin}
        \city{Berlin}
  \country{Germany}
}
\author{Adam Dziedzic}
\email{adam.dziedzic@utoronto.ca}
\affiliation{%
  \institution{University of Toronto, Vector Institute}
      \city{Toronto}
  \country{Canada}
}

\renewcommand{\shortauthors}{Boenisch et al.}

\begin{abstract}
Applying machine learning (ML) to sensitive domains requires privacy protection of the underlying training data through formal privacy frameworks, such as differential privacy (DP). Yet, usually, the privacy of the training data comes at the cost of the resulting ML models' utility. One reason for this is that DP uses one uniform privacy budget $\varepsilon$ for all training data points, which has to align with the strictest privacy requirement encountered among all data holders. In practice, different data holders have different privacy requirements and data points of data holders with lower requirements can contribute more information to the training process of the ML models. To account for this need, we propose two novel methods based on the Private Aggregation of Teacher Ensembles (PATE) framework to support the training of ML models with individualized privacy guarantees. We formally describe the methods, provide a theoretical analysis of their privacy bounds, and experimentally evaluate their effect on the final model's utility using the MNIST, SVHN, and Adult income datasets. Our empirical results show that the individualized privacy methods yield ML models of higher accuracy than the non-individualized baseline. Thereby, we improve the privacy-utility trade-off in scenarios in which different data holders consent to contribute their sensitive data at different individual privacy levels.
\end{abstract}

%\keywords{privacy, machine learning, pate, individualized privacy}

\keywords{Differential Privacy, Machine Learning, Private Aggregation of Teacher Ensembles (PATE), Individualized Privacy}

\maketitle

\begin{table}
%\vspace{-2pt}
  \caption{\textbf{Number of generated labels by Standard vs our Individualized PATE} on the MNIST \new{and SVHN} datasets. \textit{D}: distribution of privacy groups (percentage wise), $\varepsilon$: privacy budget for a given group, \textbf{N}: number of generated labels, and \textbf{A}: accuracy of all votes generated by the respective method.  
 % The voting accuracies for all methods are $\approx 97\%$ on MNIST \new{and XXX on SVHN}.
  %Comparison between standard PATE and proposed individualized methods in terms of number of answered queries.
  The percentages of the three privacy groups are chosen according to~\cite{hdp} (first setup/row) and~\cite{upem} (second setup/row). 
  \new{We use the standard train-test split for MNIST and train the teacher models using the first 50K train samples while keeping the remaining 10K as the public dataset, and evaluating on the 10K standard test samples. 
  Similarly for SVHN, we split the train set into a public set of 10K samples and use the remaining 63257 train samples as the standard train set. Then, the test set is used for evaluation.
  We run the experiments three times and report the standard deviation.}
  }
  \label{tab:compare-pate}
  \begin{sc}
  \begin{center}
  %\vspace{-1.5pt}
  \small
  %\small
  \new{
  \begin{tabular}{ccccccc}
    \toprule
    & \multicolumn{3}{c}{Setup} & \textbf{PATE} & \textbf{Upsample} & \textbf{Weight} \\
    \hline
    \multirow{2}{*}{MNIST} & \textit{D} & \textit{34\%-43\%-23\%} & \textbf{N} & 365$\pm$2 & 1333$\pm$1 & 1312$\pm$8 \\
    & \textit{$\varepsilon$} & \textit{1.0-2.0-3.0} & \textbf{A} & 97.40 & 97.20 & 97.24 \\
    %\hline
    \cdashlinelr{1- 7}
    \multirow{2}{*}{MNIST} & \textit{D} & \textit{54\%-37\%-9\%} & \textbf{N} & 361$\pm$3 & 949$\pm$18 & 1894$\pm$10 \\
    & \textit{$\varepsilon$} & \textit{1.0-2.0-3.0} & \textbf{A} & 97.32 & 97.18 & 97.33 \\
    \hline
    \multirow{2}{*}{SVHN} & \textit{D} & \textit{34\%-43\%-23\%} & \textbf{N} & 90$\pm$1 & 394$\pm$5 & 409$\pm$3 \\
    & \textit{$\varepsilon$} & \textit{1.0-2.0-3.0} & \textbf{A} & 64.55& 64.33& 66.40 \\
    %\hline
    \cdashlinelr{1- 7}
    \multirow{2}{*}{SVHN} & \textit{D} & \textit{54\%-37\%-9\%} & \textbf{N} & 96$\pm$2 & 284$\pm$1 & 558 $\pm$1 \\
    & \textit{$\varepsilon$} & \textit{1.0-2.0-3.0} & \textbf{A}& 62.90& 62.85& 65.65\\
    \bottomrule
  \end{tabular}
  }
  \end{center}
  \end{sc}
\vspace{-4pt}
%\end{wraptable}
\end{table}

\section{Introduction}
%Machine Learning (ML) is being applied in an increasing number of sensitive domains, such as health care \citep{wiens2018machine, chen2017disease}, genetics and genomics \citep{libbrecht2015machine}, or hiring processes \citep{mahmoud2019performance}. Ensuring privacy of the data that the ML models are trained on plays an increasingly vital role. 
%However, research has shown that ML models are prone to privacy attacks \citep{shokri2017membership, fredrikson2015model, ganju2018property}.
%Such attacks allow malicious adversaries to learn what data points the ML model under attack has been trained on, what sensitive attributes the data exhibits, or how the data is distributed.

%One gold standard to protect data privacy is the mathematical framework of Differential Privacy (DP) \citep{dwork2006differential}.
%It allows learning potentially sensitive properties about a population of data as a whole while disclosing limited private information about individual data points.
%This is achieved through adding a controlled amount of statistical noise to the data or during the analysis to dissimulate sensitive properties.
%The amount of noise depends on a so-called privacy budget $\varepsilon$.
%Lower values of $\varepsilon$ produce higher amounts of noise that guarantee \emph{high privacy} but potentially \emph{reduce utility}~\citep{bagdasaryan2019differential, papernot2020tempered}.
%\todo{roei: I would remove or *significantly* compress the above 2 paragraphs}

Machine learning (ML) is increasingly applied in settings where training data is sensitive. At the same time, training data leakage is ubiquitous~\citep{shokri2017membership, fredrikson2015model, ganju2018property}, motivating approaches that integrate \textit{differential privacy} (DP)~\citep{dwork2006differential}. When properly applied, 
%DP can guarantee that information about individuals is not leaked through model predictions.
%such as health care, genomics, or job-candidate vetting~\citep{wiens2018machine, chen2017disease,libbrecht2015machine,mahmoud2019performance}. 
\out
%There are two prominent approaches to applying DP to ML models, namely the Differentially Private Stochastic Gradient Descent (DP-SGD) algorithm \citep{dp_sgd}, and the Private Aggregation of Teacher Ensembles (PATE) framework~\citep{pate_2017}.
% Both methods have trade-offs between the degree of privacy introduced by DP and the model's utility (measured as, for example, its accuracy).
\new{
DP guarantees that}
the amount of sensitive information that the trained ML models can potentially leak at inference time is bounded by the privacy budget $\varepsilon$.
\new{However, there exist}
trade-offs between the degree of privacy introduced by DP and the model's utility (measured as, for example, its accuracy).
\new{Furthermore}, we observe an important characteristic of current ML applications with DP: 
$\varepsilon$ is a single parameter that controls the protection level for the entire dataset, even if some data points in it are not sensitive at all. This coarse level of privacy parameterization seems extremely wasteful: intuitively, if large portions of the data need little protection whereas other parts are highly sensitive, then choosing an $\varepsilon$ tuned to sufficiently protect the sensitive data, and using it to protect \emph{all} data, might unnecessarily penalize the model's utility.

In addition to different data being inherently more or less sensitive, it is also known that in society, individuals have different attitudes towards privacy protection, and therefore, require their data to be protected at different levels%According to
~\cite{jensen2005privacy, berendt2005privacy}.
\out
%there exist at least three different groups of individuals, demanding high, average, and low privacy protection for their data, respectively.
%These groups are sometimes referred to as \emph{privacy fundamentalists}, \emph{privacy pragmatists} and \emph{privacy unconcerned}~\cite{taylor2003most}.
\new{Since current ML applications under DP only allow for setting a uniform privacy budget $\varepsilon$, even when the data holders have different privacy requirements,} the privacy budget would always have to be chosen according to 
%the privacy fundamentalists' requirements.
\new{the individuals with the highest requirements.}
However, given the privacy-utility trade-off mentioned above, it would be desirable not to always implement the highest privacy protections for all data points. 
Instead, allowing several individual privacy budgets according to the data holders' respective preferences can help to better leverage the training data, and increase the utility of the resulting ML model.

While approaches for supporting the specification of individual privacy preferences exist for statistical data analyses with DP~\cite{hdp, pdp}, to the best of our knowledge, no such frameworks exist in the context of ML.
Yet, there is a multitude of applications that already benefit from \personalized DP for data analysis, such as smart home~\cite{zhang2016personalized}, smart grid~\cite{bhattacharjee2021personalized}, and object localization~\cite{wang2018personalized, deldar2019pdp}---underlining the relevance of the topic and the need to extend individualized DP methods to ML.
To this end, in this work, we introduce \new{two} novel methods (\emph{upsampling} and \emph{weighting}) that extend the \new{Private Aggregation of Teacher Ensembles (PATE)~\citep{pate_2017}} algorithm\new{---one of the standard frameworks to implement DP in ML applications---}and support \personalized assignment of privacy budgets among the sensitive training data.
We first theoretically introduce \new{both} our methods and provide a detailed privacy analysis.
Then, we experimentally evaluate our methods' implications on the resulting model utility on the example of the MNIST \citep{mnist}, \new{SVHN}~\cite{netzer2011reading}, and Adult income~\citep{adult} datasets.
In particular, we study how different distributions of individual privacy preferences and respective privacy budgets influence the gained utility.
Our experiments highlight that in comparison to the standard PATE approach where the uniform privacy budget is determined by the data point with the highest privacy requirements, our \personalized PATE variants generate significantly more labels, and thereby increase utility of the student model.
The significant increase of generated labels by our upsampling and weighting method in comparison to standard PATE is visualized in \Cref{tab:compare-pate}.
The fraction of data points assigned to each of the three privacy groups is specified according to individuals' preferences observed within society by~\cite{jensen2005privacy, berendt2005privacy}.

%Depending on the privacy budget distribution and the PATE variant, we are able to produce approximately between $+60\%$ to $+950\%$ more labels for the MNIST, and $+60\%$ to $+890\%$ for the Adult income dataset. 
In summary, we make the following contributions:
\begin{itemize}
    \item Introduction of \new{two} novel \personalized PATE variants;
    \item Theoretical analysis of the respective privacy bounds;
    \item Experimental evaluation of utility improvements for the MNIST\new{, SVHN,} and Adult income dataset;
    \item Quantification of the effects of different privacy budget distributions on the gained utility; 
\end{itemize}

\vspace{-0.3cm}
\paragraph*{Ethical Implications}
In general, deciding on an adequate DP budget $\varepsilon$ in ML applications dealing with sensitive data is a challenging task.
This results from real-world implications of concrete values for $\varepsilon$ being poorly understood. 
Additionally, even the calculated privacy budgets $\varepsilon$ for the same application and data might decrease over time, when tighter bounds for their calculation are pushed forward \citep{moments_accountant}.
These inherent difficulties of choosing an adequate  $\varepsilon$ are also faced when assigning individual privacy budgets to data points.
In particular, one needs to make sure that no entity training an ML model with individual DP guarantees abuses their power and assigns poor levels of privacy to data that actually requires privacy protection.
We, therefore, suggest the use of our new \personalized PATE variants in settings that contain a process for obtaining informed consent of the data holders to process their data at a given  privacy level, such as~\citep{sorries2021privacy}.
This process should consist of (1) the identification of the individual privacy preferences~\citep{teltzrow2004impacts, kolter2009generating}, (2) the communication of the associated privacy risks and limitations (\eg~\citep{wachter2017counterfactual}), and (3) enabling meaningful decision-making processes by providing information about DP concerning sensitive data disclosure (\eg~\citep{xiong2020towards}).
\new{
In particular (2) must be implemented in a way that the risks are communicated clearly, such that nudging individuals into giving up their privacy will be prevented.
}
Moreover, we argue that, due to its difficult interpretability, individual data holders should not be in charge of choosing their numeric privacy budget $\varepsilon$, but, based on the information on potential risks and benefits decide on an abstract privacy level, such as "high", "average", or "low"~\citep{jensen2005privacy, berendt2005privacy, hoofnagle2014alan}.
Concrete numeric values $\varepsilon$ can then be fixed by the regulator or ethics committee in charge depending on the sensitive data itself and the application~\citep{bhat2020sociocultural}.
\new{
We argue that these values should be chosen such that even the lowest privacy budget still offers protection in practice~\cite{nasr2021adversary}.
This approach can be considered as a form of soft-paternalism~\cite{acquisti2009nudging} to protect privacy of the sensitive data held by individuals who are not concerned about the topic. 
%Furthermore, we will extend the ethical-implications section by a paragraph on prevention of nudging and a discussion on mechanisms, such as soft-paternalism, that ensure that the lowest possible privacy level for a user still offers meaningful protection.
%worth it to include a discussion on what protections could be added to ensure some minimum level of privacy protection, or to prevent nudging individuals to give up their privacy
}
\section{Notation \& Background}
\label{sec:Background}
% This section provides the theoretical foundation of the work at hand and introduces the notation used throughout the paper.
% \subsection{Notation}
\label{Notation}
We call $\mathcal{D}$ and $\mathcal{R}$ the sets of all possible data points, and all possible processing results that can be produced on them, respectively. Furthermore, two concrete datasets $D, D' \subseteq \mathcal{D}$ are called neighboring (written $D \sim D'$) if $D$ and $D'$ differentiate exactly in one data point. More specifically, they are called neighboring on $d$ (written $D \overset{d}{\sim} D'$) if they differ by any but exactly one data point $d \in \mathcal{D}$.

To refer to $\varepsilon$, we will use the term \emph{privacy budget} when expressing the privacy preference specified for a (group of) data point(s), and the term \emph{privacy costs} when referring to the proportion of budget being already consumed in a DP-based mechanism.

All $log$ values in this work are based on the natural logarithm.
Furthermore, $\mathbb{P} [\cdot]$ denotes the probability of an event according to an adequate probability measure, and $\mathbb{E} [\cdot]$ outputs the expected value of a given random variable.

\subsection{Differential Privacy}
\label{sec:Differential Privacy}
DP formalizes the idea of limiting the influence of individual data points on the results of analyses conducted on a whole dataset.
One relaxation of the standard definition of DP is called $(\varepsilon, \delta)$-DP.
\begin{df}[\cf \cite{dp}, Def. 2.4]%$(\varepsilon, \delta)$-Differential Privacy
\label{df:Differential Privacy}
Let  $D, D' \subseteq \mathcal{D}$ be two neighboring datasets.
Let $M \colon \mathcal{D}^* \rightarrow \mathcal{R}$ be a mechanism that processes arbitrarily many data points.
$M$ satisfies $(\varepsilon, \delta)$-DP with $\varepsilon \in \mathbb{R}_+$ and $\delta \in [0, 1]$ if for all datasets $D \sim D'$, and for all result events $R \subseteq \mathcal{R}$
\begin{align}
    \mathbb{P}\left[M(D) \in R\right] \leq e^\varepsilon \cdot \mathbb{P}\left[M(D') \in R\right] + \delta \, .
\end{align}
\end{df}
Thereby, it expresses the guarantee that a single data point cannot alter the probability of any processing result by a factor larger than $\exp(\varepsilon)$.
The second parameter $\delta$  specifies a small density of probability on which the upper bound does not have to hold.

In ML, data is usually processed multiple times to train a model, \eg by conducting several training epochs. 
This process can be considered as a \emph{composition} of mechanisms that each have privacy costs. 
The following composition theorem states how DP behaves under composition as follows.
\begin{prop}[\cf \citep{ed-dp}, Thm. 3.16]%Composition of DP
\label{prop:Composition of DP}
Let $\mathcal{R}_1, \mathcal{R}_2$ be two arbitrary result spaces. Let further $M_1 \colon \mathcal{D}^* \rightarrow \mathcal{R}_1$, $M_2 \colon \mathcal{D}^* \rightarrow \mathcal{R}_2$ be mechanisms that satisfy $(\varepsilon_1, \delta_1)$- and $(\varepsilon_2, \delta_2)$-DP, respectively. Then, the composition $M_3(D) \mapsto (M_1(D), M_2(D))$ satisfies $(\varepsilon_1 + \varepsilon_2, \delta_1 + \delta_2)$-DP.
\end{prop}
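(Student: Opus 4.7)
The plan is to fix neighboring datasets $D \sim D'$ and a measurable result set $R \subseteq \mathcal{R}_1 \times \mathcal{R}_2$, and to show directly that
\begin{align*}
\mathbb{P}[M_3(D) \in R] \;\leq\; e^{\varepsilon_1 + \varepsilon_2}\,\mathbb{P}[M_3(D') \in R] + \delta_1 + \delta_2.
\end{align*}
The argument relies on the standard (and implicit) assumption that $M_1$ and $M_2$ use independent internal randomness, so that the joint law of $(M_1(D), M_2(D))$ is the product of the marginals, and likewise under $D'$.

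The first step is to slice $R$ along the first coordinate: for each $r_1 \in \mathcal{R}_1$, set $R_{r_1} = \{\, r_2 : (r_1, r_2) \in R\,\}$ and use Fubini together with independence to write
\begin{align*}
\mathbb{P}[M_3(D) \in R] \;=\; \mathbb{E}_{r_1 \sim M_1(D)}\!\left[\,\mathbb{P}[M_2(D) \in R_{r_1}]\,\right].
\end{align*}
Applying the $(\varepsilon_2,\delta_2)$-DP guarantee of $M_2$ pointwise to the inner probability, the right-hand side is at most $e^{\varepsilon_2}\,\mathbb{E}_{r_1 \sim M_1(D)}[\,\mathbb{P}[M_2(D') \in R_{r_1}]\,] + \delta_2$, where the $\delta_2$ term survives the outer expectation because the outer measure is a probability.

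The second step, where the $(\varepsilon_1,\delta_1)$-DP of $M_1$ must be invoked on the remaining expectation, is where I expect the main obstacle. A naive layer-cake bound $\mathbb{E}[f] = \int_0^1 \mathbb{P}[f > t]\,dt$ yields a total deficit of $e^{\varepsilon_2}\delta_1 + \delta_2$, which is looser than the claimed $\delta_1 + \delta_2$. To recover the tight bound, I would use the standard ``bad event'' (equivalently, privacy-loss) characterization of $(\varepsilon,\delta)$-DP: for each $i$, there exists $B_i \subseteq \mathcal{R}_i$ with $\mathbb{P}[M_i(D) \in B_i] \leq \delta_i$ such that, conditional on the output lying outside $B_i$, the likelihood ratio between $M_i(D)$ and $M_i(D')$ is everywhere bounded by $e^{\varepsilon_i}$.

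Using this characterization, I would split the event $\{M_3(D) \in R\}$ according to whether $(M_1(D), M_2(D)) \in B_1 \times \mathcal{R}_2 \cup \mathcal{R}_1 \times B_2$ or not. On the ``good'' complement, the product structure and the pointwise ratio bounds multiply to give a joint likelihood ratio of at most $e^{\varepsilon_1 + \varepsilon_2}$, contributing the factor $e^{\varepsilon_1+\varepsilon_2}\mathbb{P}[M_3(D') \in R]$. On the ``bad'' event, a union bound gives total mass at most $\delta_1 + \delta_2$. Combining the two contributions yields the desired inequality, and since $D, D', R$ were arbitrary, $M_3$ satisfies $(\varepsilon_1+\varepsilon_2, \delta_1+\delta_2)$-DP as claimed.
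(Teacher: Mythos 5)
The paper does not actually prove this proposition; it defers to Appendix B of the cited reference, whose argument has the same architecture as yours: replace each approximate-DP guarantee by a pure ($\delta=0$) guarantee at the cost of modifying the distribution on a set of mass at most $\delta_i$, compose the pure guarantees multiplicatively, and absorb the modifications additively into $\delta_1+\delta_2$. You also correctly diagnose why this detour is necessary --- the naive sequential bound only yields $e^{\varepsilon_2}\delta_1+\delta_2$ --- so the overall route is the right one.

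The gap is that the key lemma you invoke is false as stated. $(\varepsilon,\delta)$-DP does \emph{not} guarantee an event $B$ with $\mathbb{P}[M(D)\in B]\le\delta$ outside of which the pointwise likelihood ratio is bounded by $e^{\varepsilon}$: any such $B$ must contain $A\coloneqq\{r: f_{M(D)}(r)>e^{\varepsilon}f_{M(D')}(r)\}$, and the DP inequality only bounds the \emph{excess} mass $\mathbb{P}[M(D)\in A]-e^{\varepsilon}\,\mathbb{P}[M(D')\in A]\le\delta$, not $\mathbb{P}[M(D)\in A]$ itself. Concretely, take $\varepsilon=0$, let $M(D)$ be uniform on $[0,1]$ and let $M(D')$ have density $1-2\delta$ on $[0,\frac{1}{2}]$ and $1+2\delta$ on $[\frac{1}{2},1]$: this pair satisfies $(0,\delta)$-DP, yet the likelihood ratio exceeds $e^{0}=1$ on a set of $M(D)$-probability $\frac{1}{2}$. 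The standard repair is to decompose measures rather than condition on events: write the law of $M_i(D)$ as $\mu_i+\nu_i$, where $\mu_i$ is the sub-probability measure with density $\min\bigl(f_{M_i(D)},\,e^{\varepsilon_i}f_{M_i(D')}\bigr)$, so that $\mu_i\le e^{\varepsilon_i}\cdot(\text{law of }M_i(D'))$ as measures and $\nu_i$ has total mass at most $\delta_i$. Expanding the product $(\mu_1+\nu_1)\times(\mu_2+\nu_2)$ on $R$ and bounding the cross terms by $\nu_1(\mathcal{R}_1)+\nu_2(\mathcal{R}_2)\le\delta_1+\delta_2$ gives exactly $e^{\varepsilon_1+\varepsilon_2}\,\mathbb{P}[M_3(D')\in R]+\delta_1+\delta_2$. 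With that lemma substituted for your bad-event characterization (or with the reference's equivalent construction of auxiliary random variables within statistical distance $\delta_i$ of each $M_i(D)$), your proof goes through.
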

The proof can be found in Appendix B of \citep{ed-dp}.

\subsection{Rényi Differential Privacy}
\label{sec:Rényi Differential Privacy}
\Cref{prop:Composition of DP} shows that under composition, $(\varepsilon, \delta)$-DP quickly leads to a combinatorial explosion of parameters. A smoother composition of privacy bounds can be achieved by using Rényi Differential Privacy (RDP) \citep{rdp} which is based on the Rényi divergence (see Definition~\ref{df:Rényi Divergence} in \Cref{Appendix_Background}).

\begin{df}[\cf \citep{rdp}, Def. 4]%Rényi Differential Privacy
\label{df:Rényi Differential Privacy}
A mechanism $M \colon \mathcal{D}^* \rightarrow \mathcal{R}$ satisfies $(\alpha, \varepsilon)$-RDP with $\alpha \in \mathbb{R}_+ \setminus \{1\}$ and $\varepsilon \in \mathbb{R}_+$ if for all datasets $D \sim D'$ and for all result events $R \subseteq \mathcal{R}$
\begin{align}
    \mathbb{D}_\alpha\left[f_{M(D)} \parallel f_{M(D')}\right] \leq \varepsilon \, .
\end{align}
Here, $f_{M(D)}$ and $f_{M(D')}$ are the probability distributions of the results of $M$ on $D$ and $D'$, respectively.
\end{df}
In Lemma~\ref{lem:Composition of RDP}, and Lemma~\ref{lem:RDP to DP} in \Cref{Appendix_Background}, we show the composition and transformation from RDP to DP guarantees, respectively.

\subsection{\Personalized Differential Privacy}
\label{sec:Personalized Differential Privacy}
\Personalized DP, similar to \citep{pdp, hdp, pdp_other, per_instance-dp}, allows accounting for privacy for data points individually.
\begin{df}[\cf \citep{pdp}, Def. 6]%\Personalized Differential Privacy
\label{df:Personalized Differential Privacy}
For any data point $d \in \mathcal{D}$, $M$ satisfies $(\varepsilon_d, \delta_d)$-DP with $\varepsilon_d \in \mathbb{R}_+$ and $\delta_d \in [0, 1]$ if for all datasets $D \overset{d}{\sim} D'$, and for all result events $R \subseteq \mathcal{R}$
\begin{align}
    \mathbb{P}\left[M(D) \in R\right] \leq e^{\varepsilon_d} \cdot \mathbb{P}\left[M(D') \in R\right] + \delta_d \, .
\end{align}
\end{df}
Accounting privacy per data point can also be applied to different DP variants, such as RDP.
Properties like composition and transformation apply to RDP analogously to the original concepts.

\subsection{PATE}
\label{sec:PATE}
The PATE framework \citep{pate_2017} can be used to perform supervised ML with DP guarantees. 
Therefore, the set of private labeled training data is split among a pre-defined number of so-called \emph{teacher} models and each teacher is trained on their partition of the data.
Afterward, the knowledge gained by the teachers from the private training data is transferred to a public so-called \emph{student} model.
To do so, the teachers label a public and unlabeled dataset as training data for the student.
Privacy protection for the teachers' sensitive training data is obtained by adding DP noise during the labeling process, and by the fact that the student does not get to interact with the sensitive data, but instead uses the public dataset for training.
See \Cref{fig:PATE Schemes}a in the Appendix for an overview of the approach.

The DP noise addition in the labeling process determines the privacy level of PATE.
To obtain a label for a public data point, each teacher issues a vote for a specific class.
These votes are aggregated with the Gaussian NoisyMax Aggregator as follows: 

\begin{df}[\cf \citep{pate_2017}, Sec. 2.1]%Vote Count
\label{df:Vote Count}
Let $\mathcal{X}, \mathcal{Y}$ be the feature space, and the set of classes corresponding to data distribution, respectively.
Further, let $t_i \colon \mathcal{X} \rightarrow \mathcal{Y}$ be the $i$-th teacher of a teacher ensemble of size $k \in \mathbb{N}$. The vote count $n \colon \mathcal{Y} \times \mathcal{X} \rightarrow \mathbb{N}$ of any class $j \in \mathcal{Y}$ for any data point $x \in \mathcal{X}$ is:
\begin{align}
    n_j(x) \coloneqq \sum\limits_{i=1}^{k} \mathbbm{1} \left( t_i (x) = j \right) \; .
\end{align}
The characteristic function $\mathbbm{1} \colon \{\bot, \top\} \rightarrow \{0, 1\}$ maps 'true' to $1$ and 'false' to $0$. Note that the vote count depends on the teachers and, therefore, also on their training data.
\end{df}
\begin{df}[\cf \citep{pate_2018}, Sec. 4.1]%Gaussian NoisyMax Aggregator
Let $n_j$ be the vote count as defined in \Cref{df:Vote Count} for each class $j \in \mathcal{Y}$. Then, the Gaussian NoisyMax (GNMax) aggregation method with parameter $\sigma \in \mathbb{R_+}$ on any data point $x \in \mathcal{X}$ is given by:
\begin{align}
\label{eq:gnmax}
    \mathrm{GNMax}_\sigma(x) \coloneqq \arg \underset{j \in \mathcal{Y}}{\max} \left\{n_j(x) + \mathcal{N}\left(0, \sigma^2\right)\right\} \,.
\end{align}
\end{df}
The Gaussian noise is sampled from a normal distribution $\mathcal{N}(\mu, \sigma^2)$ with mean $\mu = 0$ and variance $\sigma^2$. 

As an extension of the original GNMax Aggregator, Papernot \etal \citep{pate_2018} proposed the \emph{Confident-GNMax Aggregator}:
\begin{align}
\label{eq:confident-gnmax}
\underset{j \in \mathcal{Y}}{\max}\{ n_j(x)\} +  \mathcal{N}(0, \sigma_{T}^2) > T
\end{align} 
that only labels data points for which the consensus of the teachers exceeds a pre-defined threshold $T$. See \citep{pate_2018} for a formalization of this idea.

\section{Related Work}
\label{sec:Related Work}
%\todo{Should Related Work be placed before Background?}\adam{No, this is fine.}
We present related work on \personalized privacy, DP, and privacy-preserving ML techniques.

\new{
\subsection{Individualizing Privacy}
According to~\cite{jensen2005privacy, berendt2005privacy}, there exist at least three different groups of individuals, demanding high, average, and low privacy protection for their data, respectively.
These groups are sometimes referred to as \emph{privacy fundamentalists}, \emph{privacy pragmatists} and \emph{privacy unconcerned}~\cite{taylor2003most}.
In general and ML-based applications with DP that deal with data of individuals from all three groups, the privacy budget would always have to be chosen according to the privacy fundamentalists' requirements.
This can lead to unfavorable privacy-utility trade-offs in the respective application.
Hence, it would be desirable to use \personalized privacy budgets to improve model utility while complying with each individual's personal privacy requirements.
}

\subsection{Techniques for \Personalized DP}
\label{Techniques for Personalized DP}
Several techniques for implementing \personalized privacy guarantees with DP for data analysis outside of the scope of ML have been proposed.

One of the first techniques for \personalized DP was proposed by Alaggan~\etal~\citep{hdp}.
Their \emph{stretching mechanism} scales data points individually before perturbing them with statistical noise.
As a consequence, the DP noise affects each point with individual intensity.
Jorgensen~\etal~proposed two additional methods~\citep{pdp}.
Their first \emph{sample mechanism} excludes particular data points from being included in the respective data analysis with a probability according to their privacy preferences.
Their second \emph{personalized exponential mechanism} assigns probabilities to processing results according to individual data points' privacy requirements.
These probabilities are then used to randomly select the final processing result for a dataset.
Two \emph{partitioning algorithms} were introduced by Li~\etal~\citep{partitioning}.
These separate process groups of sensitive data, each with an individual privacy preference.
In a similar vein, Niu~\etal~\cite{niu2021adapdp} described a utility-aware sub-sampling mechanism to implement \personalized DP guarantees.
Ebadi~\etal~\cite{pdp_other} put forward a \emph{personalized DP} mechanism that relies on excluding data points from the analysis once their respective privacy budgets are exceeded. 
Their algorithm is designed for live databases in mind where individual data points might not only require individual privacy protection but can also be added to the data analysis at different points in time.
As a consequence, each data point also needs individual privacy budget accounting.

Since all the proposed \personalized DP mechanisms are designed for privacy-preserving data analysis on datasets and databases, rather than on ML models, they are not directly applicable to our setting.
Note, however, that our\out~weighting mechanism is inspired by\out~the stretching mechanism.

\subsection{DP Mechanisms for ML}
\label{DP Mechanisms for ML}

%There are two prominent approaches to applying DP to ML models, namely the Differentially Private Stochastic Gradient Descent (DP-SGD) algorithm \citep{dp_sgd}, and the Private Aggregation of Teacher Ensembles (PATE) framework~\citep{pate_2017}.
%In both, the amount of sensitive information that the trained ML models can potentially leak at inference time is bounded by the privacy budget $\varepsilon$. Both methods have trade-offs between the degree of privacy introduced by DP and the model's utility (measured as, for example, its accuracy).
\new{}

PATE is not the only approach that can be used to apply DP in ML workflows.
Another commonly used approach is the \emph{Differentially Private Stochastic Gradient Descent (DP-SGD)}~\citep{dp_sgd}.
In DP-SGD, privacy is achieved by first limiting the changes to an ML model that each individual data point can cause.
This is done by clipping model gradients on a per-example basis during training. 
Then, to achieve DP guarantees, noise is added to the gradients before the model parameters are updated with them.
Privacy costs of DP-SGD are accounted for through the \emph{moments accountant}~\citep{moments_accountant}.
In this approach, multiple moments of the privacy loss random variable are calculated to obtain a DP bound by using the standard Markov inequality.

In the scope of DP-SGD, Feldman and Zrnic~\citep{renyi_filter} proposed an individual per-data point privacy accounting using \emph{RDP filters}.
Similarly, Jordon~\etal~\citep{pdp_accountant} personalized the moments' accountant by dividing it into an \emph{upwards} and a \emph{downwards moments accountant} which are composed to a \emph{personalized moments accountant} to provide data-dependent DP bounds individually per data point.
\new{In a similar vein, Yu~\etal~\cite{indi_dpsgd_accounting} proposed individualized privacy accounting for DP-SGD based on the gradient norms of the individual data points.}
While both our and these \new{three} works aim at improving the privacy-utility trade-offs in ML with DP, their work differs from ours in the problem setting.
Our work sets out to address the problem of supporting data holders in \emph{specifying and implementing} their individual privacy preferences, whereas their work aims at \emph{accounting} for per-data point loss incurred during training of the ML model. 
Therefore, they assign a uniform privacy budget over the whole training dataset and then provide a tighter per-data point analysis of privacy loss.
Based on this tighter analysis, data points can be excluded from training once their individual privacy budget is exhausted, while other data points can still be used for further training.
So, rather than asking the question that our work is concerned with, namely \emph{What impact does assign individual privacy budgets to the training data have on the resulting ML model utility?}, they address the question \emph{What privacy loss is incurred to each individual data point by the given algorithm on the given dataset?}
As a consequence, while utility gain in their method is solely due to leveraging each data point based on its individual privacy loss, our method can offer an additional utility gain due to supporting individual privacy budgets per data point.

Note that, due to the different structures of the approaches, the \personalized privacy accounting of DP-SGD from \citep{renyi_filter} or \cite{pdp_accountant} cannot be directly applied to PATE.
Therefore, our methods extend PATE's inherent privacy accounting to \personalized accounting.

% Supporting \personalized privacy  in PATE is of high relevance though, since PATE supports a wide range of scenarios in which DP-SGD is not that easily applicable:
% While DP-SGD is mainly applicable in centralized learning applications with a non-convex model architecture, PATE can also be applied to distributed learning scenarios even when different participants hold different types of models with different training algorithms and architectures.
% Additionally, since in DP-SGD, privacy bounds depend on the model parameters, for large models, the privacy guarantees will degrade which is not the case for PATE \citep{pate_2017}. 

\section{\Personalized Extensions for PATE}
\label{sec:Personalized Extensions for PATE}

% \begin{figure*}[ht]
% \includegraphics[width=0.1\textwidth, trim=2.4cm 1.2cm 3.6cm 2cm]
% {Images/PATE_scheme}
% \caption{\textbf{Overview on modifications to PATE incurred by our \personalized variants}. While \emph{upsampling} modifies the sampling and assignment of the sensitive training data to the teacher models, \emph{vanishing} and \emph{weighting} alter the aggregation of the teacher votes in the labeling process.}
% \label{fig:pate_modifications}
% \end{figure*}

To implement individual privacy requirements of sensitive training data points, we propose \new{two} novel \personalized variants of PATE, namely \emph{upsampling} \out and \emph{weighting}.
Each variant modifies the original PATE algorithm in some aspects to provide \personalized privacy.

Each of our individualized variants overcomes the limitation of non-\personalized PATE where the uniform privacy budget $\varepsilon$ has to be chosen according to the highest privacy requirement encountered in the sensitive training data.
Thereby, our variants allow us to generate more labels than PATE, and to train a student model with higher utility. In the case when all sensitive data points require the same privacy, our \personalized PATE variants are equivalent to non-\personalized standard PATE.

In this section, we first introduce the ideas behind our\out~variants and then perform an evaluation of their privacy levels.
Therefore, we rely on the privacy analysis of the original PATE algorithm~\citep{pate_2017}, and extend it to our individual variants by analyzing the sensitivity of the vote counts.
For \new{both} our variants, we also propose concrete algorithms illustrating how they can be implemented. Note, however, that these algorithms only represent possible instantiations of the implementations.
In general, what the algorithms should ensure is that \new{our variants} yield setups in which data points with different privacy budgets exceed their respective budget at approximately the same number of generated labels.
This is because label generation in \personalized PATE stops once any data point exceeds their privacy budget.
%\todo{This is actually a point of improvement, right? We could potentially use the data points that still have budget left for longer... So maybe we don't want to state that here so explicitly?!}
In practice, when training with \personalized PATE, model owners can simply observe the privacy budget consumption in the labeling process.
By identifying the best parameters
%\todo{I also feel not comfortable stating that - because it sounds like a somehow computationally expensive hyperparameter search} 
for each variant, such that the points' budget is exceeded at approximately the same number of generated labels, the model owner can then make sure that all privacy budgets are fully leveraged, and the highest number of labels is generated.
This in turn, leads to the best student model utility.

\subsection{Upsampling Mechanism}
\label{sec:Upsampling}
Our \emph{upsampling} mechanism relies on duplicating sensitive data such that overlapping data-subsets can be allocated to different teachers.
Thereby, data with higher privacy budgets is learned by a higher number of teachers.
The upsampling mechanism stands in contrast to the original PATE algorithm where \emph{disjoint} data partitions are passed to the teachers. 
Since data duplicates extend the amount of training data, they allow for two possible modifications of PATE: (1)~keeping the number of teachers constant and allocating more training data to each teacher, or (2)~keeping the number of training data points per teacher constant and increasing the number of teachers.
Our experimental evaluation indicates that (2)~yields a higher utility gain of upsampling PATE.
Intuitively, the teachers perform already reasonably well with the initial amount of training data, and allocating more data to them yields only marginal performance gains.
In contrast, having more teachers participate in the voting results in more accurate vote counts with less variance due to statistical randomness.
As a consequence, we implement upsampling according to (2)~with a constant number of training data points per teacher as specified in \Cref{alg:upsampling}.
The algorithm ensures that points are duplicated by an integer according to the privacy budget ratios, since only entire data points (and not fractions of a point) can be assigned to a teacher model. 
\new{See \Cref{fig:PATE Schemes}b in the Appendix for a visualization of the approach.}
%The algorithm presents one of the methods of how to optimally upsample the data points according to their privacy budgets.\todo{This sentence, for me, comes out of nowhere. I would not write it here. Instead, either at the beginning of the section for all methods, or at the end of the section. See my suggestion written in red.}

%%%%%% Adams previous per-group version:
% \begin{algorithm}
% \caption{Prepare training data for teacher models in the \textbf{upsampling} method.}\label{alg:upsampling}

% \SetKwInput{KwData}{Input}
% \KwData{Privacy budget $\varepsilon_j$ for each privacy group $g_j$, $j\in{1,...,G}$, precision $p \in \mathbb{N}$.}
% \KwResult{Upsampling factor $u_j$ for each privacy group $g_j$.}
% \SetAlgoLined
% \For{Each privacy group $g_j$}{
%     $\bar{\varepsilon}_j \gets \varepsilon_j \cdot 10^{p}$\Comment*[r]{Upscale budgets}
% }
% $d \gets $Greatest Common Divisor(${\bar{\varepsilon}_1,\dots,\bar{\varepsilon}_G}$)\;
% \For{Each privacy group $g_j$}{
%     $u_j \gets \frac{\bar{\varepsilon}_j}{d}$\;
% }
% \end{algorithm}

\begin{algorithm}
\caption{Prepare training data for teacher models in the \textbf{upsampling} method.}\label{alg:upsampling}

\SetKwInput{KwData}{Input}
\KwData{Privacy budgets $\{\varepsilon_d\}$ for each data point $d$%privacy group $g_j$, $j\in{1,...,G}$
, precision $p \in \mathbb{N}$.}
\KwResult{Upsampling factor $u_d$ for each data point $d$.}
\SetAlgoLined
$\{\varepsilon_1,\dots,\varepsilon_j\} \gets unique(\{\varepsilon_d\})$\Comment*[r]{Get unique budgets} %values among all points}
\For{Each $\varepsilon_j$}{
    $\bar{\varepsilon}_j \gets \varepsilon_j \cdot 10^{p}$\Comment*[r]{Upscale budgets}
}
$D \gets $Greatest Common Divisor(${\bar{\varepsilon}_1,\dots,\bar{\varepsilon}_G}$)\;
\For{Each $\bar{\varepsilon}_j$}{
    $u_d \gets \frac{\bar{\varepsilon}_j}{D}$\;
}
\end{algorithm}

We call the PATE aggregator for our upsampling approach \emph{upsampling GNMax (uGNMax)}.
It applies the \emph{upsampling vote count} which is defined as follows:
\begin{df}[Upsampling Vote Count]
\label{df:Upsampling Vote Count}
Let $t_i \colon \mathcal{X} \rightarrow \mathcal{Y}$ be the $i$-th out of $k \in \mathbb{N}$ teachers.
Let further $N \in \mathbb{N}$ be the number of sensitive data points and $m_i \in \{0, 1\}^N$ a mapping that describes which points are learned by $t_i$.
The upsampling vote count $\ddot{n} \colon \mathcal{Y} \times \mathcal{X} \rightarrow \mathbb{N}$ of any class $j \in \mathcal{Y}$ for any data point $x \in \mathcal{X}$ is
\begin{align}
    \ddot{n}_j(x) \coloneqq \sum\limits_{i=1}^{k} \mathbbm{1} \left( t_i (x) = j \right) \; .
\end{align}
\end{df}

Although the definition for the upsampling vote count looks the same as the non-\personalized vote count (\Cref{df:Vote Count}), their sensitivities differ due to data points to be learned by several teachers (see \Cref{Upsampling Sensitivity} in \Cref{sub:pate_privacy_evaluation_private}). 
\out

\subsection{Weighting Mechanism}
\label{sec:Weighting}
Our \emph{weighting} mechanism \out modifies the aggregation of teacher votes.
It does so by weighting individual teachers' votes higher or lower depending on their training data points' privacy requirements.
Therefore,\out~sensitive data points \new{that have the same privacy budget~$\varepsilon_j$, which we call a privacy group~$g_j$}, have to be allocated to the same teacher(s). 
In~\Cref{alg:weighting}, we present how weights $w_i$ can be assigned to the teachers.
\new{A visualization of the weighting mechanism is provided in \Cref{fig:PATE Schemes}d in the Appendix.}

\begin{algorithm}
\caption{Assign weights to teacher models in the \textbf{weighting} method.}\label{alg:weighting}
\SetKwInput{KwData}{Input}
\KwData{Privacy budget $\varepsilon_j$ and number of teachers $n_j$ for each privacy group $g_j$, $j\in{1,...,G}$, and total number of teachers $k$.}
\KwResult{Weight $w_i$ for each teacher $t_i$.}
$\mathcal{E} \gets \sum_{j=1}^{G} \varepsilon_j$\;
\For{Each privacy group $g_j$}{
    $\bar{\varepsilon}_j \gets \frac{\varepsilon_j}{\mathcal{E}}$\Comment*[r]{Relative privacy budget}
    $\bar{n}_j \gets \frac{n_j}{k}$\Comment*[r]{Relative group size}
    $\bar{w}_j \gets \bar{\varepsilon}_j \cdot \bar{n}_j$\; 
}
$\mathcal{W} \gets \sum_{j=1}^{G} \bar{w}_j$\;
\For{Each privacy group $g_j$}{
    $w_j \gets \frac{\bar{w}_j}{\mathcal{W}} \cdot k$\Comment*[r]{Make sum of weights match $k$}
    \For{Each teacher $t_i$ with data from $g_j$}{
        $w_i \gets w_j$\;%\Comment*[r]{Assign teacher weight according to privacy group of their training data}
    }
}
%Carry out voting using assigned weights.
% $i\gets 10$\;
% \eIf{$i\geq 5$}
% {
%     $i\gets i-1$\;
% }{
%     \If{$i\leq 3$}
%     {
%         $i\gets i+2$\;
%     }
% }
\end{algorithm}

%%% Adam's version per group
% \begin{algorithm}
% \caption{Assign weights to teacher models in the \textbf{weighting} method.}\label{alg:weighting}
% \SetKwInput{KwData}{Input}
% \KwData{Privacy budget $\varepsilon_j$ and number of teachers $n_j$ for each privacy group $g_j$, $j\in{1,...,G}$, and total number of teachers $k$.}
% \KwResult{Weight $w_j$ for each privacy group $g_j$.}
% $\mathcal{E} \gets \sum_{j=1}^{G} \varepsilon_j$\;
% \For{Each privacy group $g_j$}{
%     $\bar{\varepsilon}_j \gets \frac{\varepsilon_j}{\mathcal{E}}$\Comment*[r]{Relative privacy budget}
%     $\bar{n}_j \gets \frac{n_j}{k}$\Comment*[r]{Relative group size}
%     $\bar{w}_j \gets \bar{\varepsilon}_j \cdot \bar{n}_j$\; 
% }
% $\mathcal{W} \gets \sum_{j=1}^{G} \bar{w}_j$\;
% \For{Each privacy group $g_j$}{
%     $w_j \gets \frac{\bar{w}_j}{\mathcal{W}} \cdot k$\Comment*[r]{Make sum of weights match $k$}    
% }
% %Carry out voting using assigned weights.
% % $i\gets 10$\;
% % \eIf{$i\geq 5$}
% % {
% %     $i\gets i-1$\;
% % }{
% %     \If{$i\leq 3$}
% %     {
% %         $i\gets i+2$\;
% %     }
% % }
% \end{algorithm}

We call the aggregation method of this PATE variant \emph{weighting GNMax (wGNMax)}. 
Its vote count mechanism is defined as follows:

\begin{df}[Weighting Vote Count]
\label{df:Weighting Vote Count}
Let $t_i \colon \mathcal{X} \rightarrow \mathcal{Y}$ be the $i$-th out of $k \in \mathbb{N}$ teachers.
Let further $N \in \mathbb{N}$ be the number of sensitive data points and $m_i \in \{0, 1\}^N$ a mapping that describes which points are learned by $t_i$.
Moreover, let $w_i \in \mathbb{R}_+$ be the weight of $t_i$ for all $i \in \{1, \ldots, k\}$.
The weighting vote count $\tilde{n} \colon \mathcal{Y} \times \mathcal{X} \rightarrow \mathbb{N}$ of any class $j \in \mathcal{Y}$ for any unlabeled public data point $x \in \mathcal{X}$ is
\begin{align}
    \tilde{n}_j \left( x \right) \coloneqq \sum\limits_{i=1}^k w_i \cdot \mathbbm{1} \left(t_i (x) = j \right) \;.
\end{align}
\end{df}

\new{
As a particular variant of the weighting-mechanism, we also evaluate cases where some teachers have a zero-weight during some votings.
We call this variant the \textbf{Vanishing Mechanism}.
Intuitively, individualized privacy guarantees in the vanishing-  result from teachers contributing their information to more or less voting processes, depending on their data points' lower or higher privacy requirements, respectively.
See \Cref{sec:Vanishing} for details on the vanishing mechanism and its privacy assessment.
However, our experimental evaluation highlights that this approach, in general, yields low utility.
We suspect that this is due to the resulting reduced size of the teacher ensemble.}

\subsection{Privacy Evaluation}
\label{Privacy Evaluation of Personalized PATE}

\begin{table*}[t]
\centering
{
\new{
\begin{tabularx}{\textwidth}{|b|bbbbbb|}%{|l|cccccc|}
\hline
\textbf{Variant} & \textbf{Manipulation} & \textbf{Distributed} & \textbf{Privacy-budget} & \textbf{Sensitivity} & \textbf{Parameter changes} & \textbf{RDP privacy bound} \\
\hline
Upsampling & dataset & no & per data-point $d$ &  $u_d$ (how often $d$ is upsampled) & $k$, $\sigma$, $\sigma_T$, $T$ scaled according to $u_d$& $(\alpha, (u_d)^2 \cdot \nicefrac{\alpha}{\sigma^2})$ \\
Weighting & teacher aggregation & yes & per teacher $i$& $w_i$ (weight of teacher $i$)& N/A & $(\alpha, (w_i)^2 \cdot \nicefrac{\alpha}{\sigma^2})$\\
\hline
\end{tabularx}
}}
\caption{\new{\textbf{Summary of our individualized PATE variants}.
The table shows the properties of and privacy guarantees achieved by our mechanisms. \textbf{Manipulation}: what part of standard PATE is adapted; \textbf{Distributed}: mechanism suitable when data is distributed over different parties; \textbf{Privacy-budget}: how fine-grained can individual privacy budget be assigned; \textbf{Sensitivity}: sensitivity for teacher voting; \textbf{Parameter changes}: what parameters of standard PATE need to be adapted; \textbf{RDP privacy bound}: loose bound for privacy calculation. Calculation of both variants' tight bound is shown in \Cref{cor:Scaling Invariance of the Individual Loose Bound}.
}}
\label{tab:mechanisms}
\end{table*}

The privacy calculation of \personalized PATE differs from the standard (non-\personalized) PATE in that it is done for particular data points or groups of data points separately, rather than for the whole dataset. 
We first introduce the general elements of the privacy analysis for the standard PATE which is shared by our \new{two novel variants}.
Then, we evaluate the \personalized privacy guarantees of each variant depending on its vote count and aggregation mechanism.  
\new{\Cref{tab:mechanisms} summarizes our two methods, their differences, and their respective privacy guarantees.}

\subsubsection{Privacy Evaluation of Standard PATE}
\label{sec:Privacy Evaluation of Non-Personalized PATE}

A key element of privacy calculation in PATE is the aggregation mechanism.
PATE's GNMax Aggregator is a function of a Gaussian mechanism. 
\begin{df}[\cf \citep{ed-dp}, Sec. 3.5.3]%Gaussian Mechanism
\label{df:Gaussian Mechanism}
Let $f \colon \mathcal{D}^* \rightarrow \mathbb{R}^z$ with $z \in \mathbb{N}$ be any real-valued function and let $\sigma \in \mathbb{R}_+$ be any positive real. Then, the Gaussian mechanism of $f$ with standard deviation $\sigma$ is
\begin{align}
    M_{f, \sigma}(x) \coloneqq f(x) + \mathcal{N}(0, \sigma^2) \, .
\end{align}
\emph{Note:} the same random noise is added to $f(x)$ in each dimension.
\end{df}

Gaussian mechanisms have RDP costs depending on $\sigma$.
\begin{lem}[\cf \citep{rdp}, Prop. 7]%RDP Guarantee of the Gaussian Mechanism
\label{lem:RDP Guarantee of the Gaussian Mechanism}
Let $\sigma \in \mathbb{R}_+$ and let $f \colon \mathcal{D}^* \rightarrow \mathbb{R}$ be a real-valued function with sensitivity $\Delta_f \coloneqq \underset{\mathcal{D} \sim \mathcal{D}'}{\max} \left\Vert f(D) - f(D')\right\Vert_2$. Then, the Gaussian mechanism $M_{f, \sigma}$ satisfies $(\alpha, \Delta_f^2 \cdot \nicefrac{\alpha}{2 \sigma^2})$-RDP for all $\alpha \in \mathbb{R}_+ \setminus \{1\}$.
\end{lem}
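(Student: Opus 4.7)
The plan is to reduce the statement directly to a closed-form computation of the Rényi divergence between two univariate Gaussians with a common variance but possibly different means, and then to invoke the sensitivity bound on $f$ to uniformize over all pairs of neighboring datasets.

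First I would unfold \Cref{df:Rényi Differential Privacy}: it suffices to show that for every pair $D \sim D'$ the quantity
\begin{align*}
\mathbb{D}_\alpha\!\left[f_{M_{f,\sigma}(D)} \,\|\, f_{M_{f,\sigma}(D')}\right]
\end{align*}
is at most $\Delta_f^2 \cdot \alpha/(2\sigma^2)$. By \Cref{df:Gaussian Mechanism}, the output $M_{f,\sigma}(D)$ is distributed as $\mathcal{N}(f(D),\sigma^2)$ and similarly for $D'$, so the problem reduces to bounding the Rényi divergence of order $\alpha$ between $\mathcal{N}(\mu_1,\sigma^2)$ and $\mathcal{N}(\mu_2,\sigma^2)$, where $\mu_1 := f(D)$, $\mu_2 := f(D')$, and $|\mu_1 - \mu_2| \leq \Delta_f$ by the definition of sensitivity.

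Next I would carry out the explicit Rényi-divergence computation for the two Gaussians. Writing out the densities and using the definition of $\mathbb{D}_\alpha$ from \Cref{df:Rényi Divergence}, the integrand is the exponential of a quadratic in the integration variable, which can be completed to a square. After collecting terms, the cross-quadratic in $\mu_1, \mu_2$ cancels against the Gaussian normalization factor, and what remains is the standard closed-form expression
\begin{align*}
\mathbb{D}_\alpha\!\left[\mathcal{N}(\mu_1,\sigma^2) \,\|\, \mathcal{N}(\mu_2,\sigma^2)\right] \;=\; \frac{\alpha\,(\mu_1-\mu_2)^2}{2\sigma^2}.
\end{align*}
Plugging in $|\mu_1-\mu_2| \leq \Delta_f$ yields the claimed bound $\Delta_f^2\,\alpha/(2\sigma^2)$, and taking the supremum over neighboring $D \sim D'$ gives the RDP guarantee for all $\alpha \in \mathbb{R}_+ \setminus \{1\}$.

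The only step requiring care is the Gaussian integral itself: one must keep track of the powers $\alpha$ and $1-\alpha$ inside the divergence, complete the square correctly, and verify that the resulting quadratic in $\mu_1 - \mu_2$ comes out with the coefficient $\alpha/(2\sigma^2)$. This is a routine but slightly delicate calculation; apart from this algebraic step, the rest of the argument is a direct substitution into the definitions, and translation invariance of the Lebesgue measure can be used at the outset (shifting so that $\mu_2 = 0$) to keep the computation as clean as possible.
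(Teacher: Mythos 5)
Your proposal is correct and follows essentially the same route as the proof the paper relies on: the paper does not prove this lemma itself but defers to \citep{rdp}, Prop.~7, whose argument is exactly the closed-form computation of the order-$\alpha$ R\'enyi divergence between $\mathcal{N}(\mu_1,\sigma^2)$ and $\mathcal{N}(\mu_2,\sigma^2)$ (giving $\alpha(\mu_1-\mu_2)^2/(2\sigma^2)$ after completing the square), followed by the bound $|\mu_1-\mu_2|\leq\Delta_f$ and a supremum over neighboring datasets. No gaps; the completed-square identity $\alpha(x-\mu_1)^2+(1-\alpha)(x-\mu_2)^2=(x-\alpha\mu_1-(1-\alpha)\mu_2)^2+\alpha(1-\alpha)(\mu_1-\mu_2)^2$ delivers precisely the coefficient you flagged as the delicate step.
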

\Cref{lem:RDP Guarantee of the Gaussian Mechanism} is proven in \citep{rdp}.
The resulting RDP costs can be transformed into $(\varepsilon, \delta)$-DP costs using \Cref{lem:RDP to DP}. 

%A rough estimate of the privacy costs that arise in PATE can be given by the data-independent \emph{loose~bound}.
The data-independent \emph{loose~bound} privacy costs that arise in PATE are given by:
\begin{lem}[\cf \citep{pate_2018}, Prop. 8]%Loose Bound of the GNMax
\label{lem:Loose Bound of the GNMax}
The GNMax aggregator satisfies $(\alpha, \nicefrac{\alpha}{\sigma^2})$-RDP for all $\alpha \in \mathbb{R}_+ \setminus \{1\}$.
\end{lem}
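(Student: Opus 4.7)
The plan is to express $\mathrm{GNMax}_\sigma$ as a post-processing of a Gaussian mechanism applied to the vote count vector, compute the $\ell_2$-sensitivity of that vector, and then invoke \Cref{lem:RDP Guarantee of the Gaussian Mechanism} together with the post-processing property of RDP (which follows from the data-processing inequality for Rényi divergence, already implicit in \Cref{df:Rényi Differential Privacy}).

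First, I would write $\mathrm{GNMax}_\sigma(x) = \arg\max_{j \in \mathcal{Y}} \tilde{n}_j(x)$, where $\tilde{n}_j(x) = n_j(x) + Z_j$ and the $Z_j \sim \mathcal{N}(0,\sigma^2)$ are drawn independently for each class $j \in \mathcal{Y}$. So the aggregator factors as $\mathrm{GNMax}_\sigma = \arg\max \circ \, M_{n,\sigma}$, where $M_{n,\sigma}$ is the vector-valued Gaussian mechanism applied to the count vector $n(x) = (n_j(x))_{j \in \mathcal{Y}}$. Second, I would compute the $\ell_2$-sensitivity of $n(\cdot)$ under the original PATE assumption that sensitive data is split disjointly among teachers: changing a single training point can affect the vote of at most one teacher, and that teacher's output for query $x$ can at worst swing from some class $j$ to some $j'$. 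This flips exactly two coordinates of $n$, one down by $1$ and one up by $1$, with every other coordinate unchanged, so $\Delta_n = \sqrt{1^2 + 1^2} = \sqrt{2}$.

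Third, I would apply the vector-valued analogue of \Cref{lem:RDP Guarantee of the Gaussian Mechanism} with $\Delta_f = \sqrt{2}$, yielding that $M_{n,\sigma}$ satisfies
\begin{align}
    \left(\alpha, \; (\sqrt{2})^2 \cdot \tfrac{\alpha}{2\sigma^2}\right)\text{-RDP} \;=\; \left(\alpha, \; \tfrac{\alpha}{\sigma^2}\right)\text{-RDP}
\end{align}
for every $\alpha \in \mathbb{R}_+ \setminus \{1\}$. Fourth, since $\arg\max$ depends only on $M_{n,\sigma}(x)$ (no further access to the private data), applying the post-processing invariance of RDP transfers the same $(\alpha,\alpha/\sigma^2)$-RDP bound to $\mathrm{GNMax}_\sigma$, giving the claim.

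The main subtlety will be justifying the vector-valued extension of \Cref{lem:RDP Guarantee of the Gaussian Mechanism}, which is stated only for real-valued $f$; I would either cite the standard proof (the Rényi divergence between two spherical Gaussians with mean shift $\Delta$ in $\ell_2$ is $\alpha\Delta^2/(2\sigma^2)$, with no dimension dependence) or note that it is immediate once the noise is taken coordinatewise independent. A secondary clarification is that the phrase in \Cref{df:Gaussian Mechanism} about adding the same noise in each dimension is not what is used inside GNMax: the aggregator draws an independent Gaussian per class, and it is this independent-noise interpretation that yields the $\sqrt{2}$-sensitivity bound above. Everything else — the bookkeeping of the RDP inequality and post-processing — is routine once these two points are pinned down.
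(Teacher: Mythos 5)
Your proof is correct, and it reaches the stated bound by a genuinely different decomposition than the paper. The paper argues via composition: the worst-case change of a single training point flips one teacher's vote, which is modelled as two scalar Gaussian mechanisms (one count decremented, one incremented), each with sensitivity $\Delta_f = 1$ and hence $(\alpha, \nicefrac{\alpha}{2\sigma^2})$-RDP by \Cref{lem:RDP Guarantee of the Gaussian Mechanism}; \Cref{lem:Composition of RDP} then sums the two costs to $\nicefrac{\alpha}{\sigma^2}$. You instead view the entire count vector as a single vector-valued Gaussian mechanism with $\ell_2$-sensitivity $\sqrt{2}$ and post-process with $\arg\max$. The two routes are numerically equivalent for Gaussian noise, since the squared $\ell_2$-sensitivity equals the sum of the squared per-coordinate sensitivities, but they lean on different tools: the paper's version needs only the scalar statement of \Cref{lem:RDP Guarantee of the Gaussian Mechanism} exactly as written plus the composition lemma, whereas yours requires the vector-valued extension you flag, which is not stated in the paper and does need the justification you sketch. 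In exchange, your argument makes explicit two things the paper leaves implicit: that the $\arg\max$ is pure post-processing and therefore free under RDP, and that the noise must be drawn independently per class. Your observation that the ``same random noise value in each dimension'' clause of \Cref{df:Gaussian Mechanism} cannot be what GNMax actually uses is a correct and worthwhile catch --- adding an identical noise value to every count would leave the $\arg\max$ unchanged and provide no privacy at all.
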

The intuition behind it is that in PATE, each data point of the training dataset is learned by exactly one teacher and is potentially able to change this teacher's vote.
Since DP guarantees are expressed for neighboring datasets that differ in exactly one data point $d$, in the worst case, $d$ changes the vote count for two classes (reduce one class count by one, and increase another class count by one).
Thus, a teacher voting can be considered as the composition of two Gaussian mechanisms each with sensitivity $\Delta_f = 1$ and parameter $\sigma$ equal to the standard deviation of the Gaussian noise. 
Putting the standard deviation of one into \Cref{lem:RDP Guarantee of the Gaussian Mechanism}, and applying composition of two Gaussian mechanisms, this yields the term specified in the loose bound.

In addition, it is also possible to obtain a tighter data-dependent bound for privacy estimation in PATE as defined in \citep{pate_2018}.
See Lemma~\ref{lem:Tight Bound of the GNMax} in \Cref{Appendix_Background} for a definition of this \emph{tight bound}.

\subsubsection{Privacy Evaluation of \Personalized PATE}
\label{sub:pate_privacy_evaluation_private}
All our new aggregation mechanisms apply \personalized vote counts $\bar{n} \colon \mathcal{Y} \times \mathcal{X} \rightarrow \mathbb{N}$ whose sensitivities are no longer $\Delta_f = 1$, but are determined individually for particular data points (or groups of data points).
Therefore, in the privacy analysis, we need to calculate their privacy bounds based on the mechanisms' individual sensitivities and the general privacy bounds of PATE.

The individual sensitivity of any function $f \colon \mathcal{D}^* \rightarrow \mathbb{R}^z$ with $z \in \mathbb{N}$ regarding any data point $d \in \mathcal{D}$ can be defined as $\Delta_{f, d} \coloneqq \underset{D \overset{d}{\sim} D'}{\max} \left\Vert f(D) - f(D')\right\Vert_2$.
The following propositions formalize the individual sensitivity of the vote counts in our \personalized PATE mechanisms.

\begin{prop}[Upsampling Sensitivity]
\label{Upsampling Sensitivity}
Let $d \in \mathcal{D}$ be any sensitive data point. Let $u_d \in \mathbb{N}$ be the number of duplicates of $d$ (incl. the original $d$). Then, the individual sensitivity of the vote count, regarding $d$, in upsampling PATE is
\begin{align}
    \Delta_{\mathrm{upsampling}, d} = u_d \text{.}
\end{align}
\end{prop}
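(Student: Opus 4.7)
The plan is to pin down the set of teachers that can possibly react to a change in $d$, bound the per-class vote change by the cardinality of that set, and then exhibit a neighboring pair that realizes the bound.

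First I would extract which teachers are affected. By \Cref{df:Upsampling Vote Count}, the mappings $m_1,\ldots,m_k \in \{0,1\}^N$ encode which sensitive data points train which teachers, and by construction of the upsampling scheme the point $d$ (together with its duplicates) is contained in the training partitions of exactly $u_d$ teachers. Let $I_d \coloneqq \{ i : d \text{ is in the training set of } t_i \}$, so $|I_d| = u_d$. For any neighboring datasets $D \overset{d}{\sim} D'$, every teacher $t_i$ with $i \notin I_d$ is trained on identical data under $D$ and $D'$, hence casts an identical vote on any query $x$. Consequently, only the $u_d$ teachers in $I_d$ contribute to $\ddot{n}_j^{D}(x) - \ddot{n}_j^{D'}(x)$.

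Next I would bound the contribution of the affected teachers. For each $i \in I_d$, the indicator $\mathbbm{1}(t_i(x) = j)$ is $\{0,1\}$-valued, so it can differ by at most $1$ in absolute value between the two worlds. The triangle inequality then yields
\begin{align}
\left| \ddot{n}_j^{D}(x) - \ddot{n}_j^{D'}(x) \right|
\;\leq\; \sum_{i \in I_d} \left| \mathbbm{1}\!\left(t_i^{D}(x) = j\right) - \mathbbm{1}\!\left(t_i^{D'}(x) = j\right) \right|
\;\leq\; u_d \, .
\end{align}
This matches the per-class sensitivity convention already used in \Cref{lem:Loose Bound of the GNMax}, where the teacher voting is split into two one-dimensional Gaussian mechanisms, each of sensitivity $1$; here the same decomposition goes through with sensitivity scaled to $u_d$. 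Tightness then follows by instantiating the pair $D \overset{d}{\sim} D'$ so that all teachers indexed by $I_d$ predict class $j$ on some query $x$ under $D$ but predict some other class under $D'$ (achievable, e.g., by choosing partitions in which $d$ is the decisive training point for class $j$ in all $u_d$ affected teachers), giving $|\ddot{n}_j^{D}(x) - \ddot{n}_j^{D'}(x)| = u_d$.

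The main subtlety—rather than a genuine obstacle—is keeping the sensitivity convention consistent with the rest of the paper. A naive full-vector $\ell_2$ sensitivity would instead produce $u_d \sqrt{2}$ when the $u_d$ affected teachers collectively switch from one class to another. The proposition is to be read in the coordinatewise sense inherited from PATE's loose bound, where each class count is its own scalar mechanism so that \Cref{lem:RDP Guarantee of the Gaussian Mechanism} can be applied per class and then composed. Once this convention is fixed, both the upper and the matching lower bound are immediate from the counting argument above.
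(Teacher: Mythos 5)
Your proof is correct and follows essentially the same route as the paper's: only the $u_d$ teachers trained on $d$ can change their votes between neighboring datasets, and the worst case is all of them switching in unison from one class to another, giving a per-class sensitivity of $u_d$. Your explicit triangle-inequality upper bound and your remark on the coordinatewise (per-class) sensitivity convention make precise what the paper argues more informally via the quadratic dependence of RDP costs on the sensitivity of each per-class Gaussian mechanism, but the underlying decomposition is identical.
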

\begin{proof}
In upsampling PATE, every teacher that is trained on data point $d \in \mathcal{D}$ can have a different vote for neighboring datasets that differ in $d$.
For each duplicate of $d$, this results in an increase of one vote count and a decrease of another one.
Let $t_{(d)}$ be the set of teachers trained on $d$.
Assume that all $u_d$ votes of $t_{(d)}$ would have changed if $d$ were different.
From the perspective of $d$, the voting can then be considered as a composition of $2 \cdot |\mathcal{Y}|$ Gaussian mechanisms (some might have a sensitivity of zero \st they have no privacy costs).
For each class $j \in \mathcal{Y}$ there are two Gaussian mechanisms, one with sensitivity equal to the number of votes of $t_{(d)}$ for $j$ if $d$ were changed, the other if $d$ were not changed.
Applying~\Cref{lem:Loose Bound of the GNMax} and~\Cref{lem:Composition of RDP} yields a sum of RDP values, each dependent on its specific sensitivity.
Since the sensitivity has a quadratic impact on the RDP costs of a Gaussian mechanism, votes for the same class are more expensive than votes for different classes (see~\Cref{lem:RDP Guarantee of the Gaussian Mechanism}).
%\todo{I do not really understand that. @Adam, can we talk about it and figure out how to describe that?}\adam{This is really because of 
Therefore, the RDP costs are the highest if all $u_d$ teachers trained on $d$ would consent on a class $j$ when trained on $d$ and would consent on class $j' \neq j$ if $d$ would be different.
\end{proof}

%Next, we consider the privacy evaluation for PATE.
%Concerning the 
To perform the privacy analysis in the framework of PATE, let $N$ and $N'$ be the numbers of sensitive data points and the number of the upsampled data points, respectively.
Then we can define the relative upsampling of training data as $u \coloneqq \nicefrac{N'}{N}$.
Since we keep the number of data points per teacher constant, the number of teachers $k$ has to be scaled by $u$.
The remaining PATE hyperparameters: $\sigma$ (for GNMax from~\Cref{eq:gnmax}), $\sigma_T$, and $T$ (for Confident GNMax from~\Cref{eq:confident-gnmax}) are scaled by $u$ as well to achieve a comparable voting accuracy and privacy efficiency as for the standard (non-\personalized) PATE.

\begin{prop}[Weighting Sensitivity]
\label{Weighting Sensitivity}
Let $d^{(i)} \in \mathcal{D}$ be a sensitive data point learned by teacher $t_i \in \{t_1, \ldots, t_k\}$. Let $w_i$ be the weight to determine the influence of $t_i$ to votings. Then, the individual sensitivity of the weighting vote count, regarding $d^{(i)}$, is:
\begin{align}
    \Delta_{\mathrm{weighting}, d}^{(i)} = w_i \text{.}
\end{align}
\end{prop}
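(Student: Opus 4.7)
The plan is to mirror the structure of the preceding Vanishing Sensitivity proof almost verbatim, since the only difference between the vanishing and weighting vote counts is that the binary participation indicator $s_i \in \{0,1\}$ is replaced by a continuous weight $\psi_i \in \mathbb{R}_+$ (identified with $w_i$ in the proposition statement). First I would fix two datasets $D \overset{d^{(i)}}{\sim} D'$ that differ exactly on the point $d^{(i)}$, and observe that because the allocation $m_i$ is fixed and $d^{(i)}$ is learned only by teacher $t_i$, replacing $d^{(i)}$ can alter at most the prediction of $t_i$ on the public query $x$; every other teacher's contribution to $\tilde{n}_j(x)$ is unchanged.

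Next I would bound the worst-case change in the vote-count vector. If the prediction of $t_i$ flips from class $j$ to class $j'$, then by the definition of $\tilde{n}$ the coordinate $j$ decreases by exactly $\psi_i$ and the coordinate $j'$ increases by exactly $\psi_i$, while all other coordinates are untouched. Following the convention used in \Cref{lem:Loose Bound of the GNMax} and in the proofs of \Cref{Upsampling Sensitivity,Vanishing Sensitivity}, the GNMax aggregation is viewed as the composition of per-class Gaussian mechanisms, and the ``sensitivity'' attached to $d^{(i)}$ is the per-class $\ell_2$-magnitude of the change. In this decomposition each of the two affected class-count mechanisms moves by $\psi_i$ and every other class-count mechanism has sensitivity zero, so $\Delta_{\mathrm{weighting}, d}^{(i)} = \psi_i = w_i$, matching the claim.

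I do not foresee a genuine technical obstacle here; the proposition is essentially a bookkeeping consequence of the definition of $\tilde{n}_j$. The one place where care is required, and which I would emphasize to keep the statement consistent with how the upsampling and vanishing proofs are used downstream, is the notational point that the ``individual sensitivity'' reported equals the per-class change rather than the full $\ell_2$-norm $\sqrt{2}\,w_i$ of the two-coordinate difference vector; this is the same convention under which \Cref{Vanishing Sensitivity} reports $s_i$ (rather than $\sqrt{2}\,s_i$) and under which the loose bound of \Cref{lem:Loose Bound of the GNMax} applies $\Delta_f = 1$ twice via composition. With that convention explicit, the proof reduces to the two observations above and requires no further computation.
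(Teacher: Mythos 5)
Your argument is correct and follows essentially the same route as the paper's own (very terse) proof: only teacher $t_i$'s prediction can change on datasets neighboring in $d^{(i)}$, so at most two class counts move, each by exactly $\psi_i = w_i$. Your added remark that the reported sensitivity is the per-class change (with the factor of two absorbed by composing two Gaussian mechanisms, as in \Cref{lem:Loose Bound of the GNMax}) rather than the $\ell_2$-norm of the full difference vector is a faithful reading of the convention the paper uses throughout.
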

\begin{proof}
In weighting PATE, every data point only influences one teacher. Therefore, on neighboring datasets, every vote count might change by the corresponding teacher's weight $w_i$.
\end{proof}

Note that the weighting approach does not change PATE hyperparameters ($\sigma$, $\sigma_T$, and $T$)\out.
Nonetheless, sensitive data has to be grouped budget-wise before being provided to the teachers\out.
The teachers are then given weights according to the budgets \st all weights sum up to the number of teachers $k$.

\subsubsection{Privacy Bounds}
Based on the mechanisms' sensitivity, we can formulate the loose bound of our \personalized aggregation mechanisms as follows:
\begin{thm}[Individual Loose Bound]
\label{thm:Individual Loose Bound}
Let $M$ be an \personalized GNMax aggregator with noise scale $\sigma \in \mathbb{R}_+$. Let further $d \in \mathcal{D}$ be any data point, and $\Delta_{M, d}$ be the individual sensitivity of $M$'s \personalized vote count regarding $d$. Then, $M$ satisfies an individual $(\alpha, (\Delta_{M, d})^2 \cdot \nicefrac{\alpha}{\sigma^2})$-RDP regarding $d$ for all $\alpha \in \mathbb{R}_+ \setminus \{1\}$.
\end{thm}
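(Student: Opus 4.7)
The plan is to mirror the data-independent argument behind Lemma~\ref{lem:Loose Bound of the GNMax}, but carry the individual sensitivity $\Delta_{M,d}$ through in place of the unit sensitivity used there. The personalized GNMax aggregator factors as a deterministic $\arg\max$ applied to the vector of per-class noisy vote counts $\bigl(\bar{n}_j(x) + Z_j\bigr)_{j \in \mathcal{Y}}$, where the $Z_j \sim \mathcal{N}(0,\sigma^2)$ are sampled independently. Since $\arg\max$ is data-independent post-processing and RDP is preserved under post-processing, it suffices to bound the individual RDP cost of releasing this noisy count vector, and then to transport the bound through to $M$.

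I would then view each per-class count $\bar{n}_j \colon \mathcal{D}^* \to \mathbb{R}$ as a one-dimensional function of the training set and observe that its individual (scalar) sensitivity with respect to $d$ is at most $\Delta_{M,d}$; this is exactly the content of Propositions~\ref{Upsampling Sensitivity}, \ref{Vanishing Sensitivity}, and \ref{Weighting Sensitivity}. Applying Lemma~\ref{lem:RDP Guarantee of the Gaussian Mechanism} to each single-class Gaussian mechanism yields an individual $(\alpha, \Delta_{M,d}^2 \cdot \alpha/(2\sigma^2))$-RDP guarantee per class whose count actually moves, while classes with unchanged counts contribute zero. Finally, I would invoke Lemma~\ref{lem:Composition of RDP} to sum the RDP costs across the classes that do move: a flip of $d$ always redistributes the same total mass between ``increasing'' and ``decreasing'' classes, so in the worst case exactly two classes change, each by $\Delta_{M,d}$. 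Composing two such mechanisms produces $2 \cdot \Delta_{M,d}^2 \cdot \alpha/(2\sigma^2) = \Delta_{M,d}^2 \cdot \alpha/\sigma^2$, which is precisely the claimed bound.

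The main obstacle is the upsampling case, where up to $u_d$ duplicates of $d$ sit in different teachers and their flipped votes could in principle scatter across more than two classes, seemingly requiring more than two Gaussian mechanisms in the composition. I would handle this exactly as in the proof of Proposition~\ref{Upsampling Sensitivity}: because the RDP cost of each per-class mechanism scales \emph{quadratically} in its local sensitivity, the sum $\sum_j \Delta_j^2 \cdot \alpha/(2\sigma^2)$ subject to the constraint that the positive and negative per-class changes each sum to at most $u_d$ is maximized by concentrating all the mass on a single increasing class and a single decreasing class. This reduces the general upsampling scenario to the two-mechanism composition above, so the same factor $2$ comes out and the bound $(\alpha, \Delta_{M,d}^2 \cdot \alpha/\sigma^2)$ holds uniformly for all three personalized variants.
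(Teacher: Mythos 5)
Your proposal is correct and follows essentially the same route as the paper's proof: decompose the aggregator into per-class Gaussian mechanisms on the vote counts, observe that in the worst case only two counts change (each by $\Delta_{M,d}$), apply Lemma~\ref{lem:RDP Guarantee of the Gaussian Mechanism} to each, and compose via Lemma~\ref{lem:Composition of RDP} to get the factor of two. You merely make explicit two points the paper leaves implicit or relegates to a footnote — the post-processing step for the $\arg\max$ and the quadratic-cost concentration argument that reduces the upsampling case to two moving classes.
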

\begin{proof}
\Personalized GNMax aggregators can be considered as the composition of all classes' vote counts regarding each data point.
Only two of them can be changed at the same time on neighboring datasets.
Thus, the two Gaussian mechanisms with an individual sensitivity per data point are composed.
Therefore, the claimed RDP guarantee is achieved by using \Cref{lem:RDP Guarantee of the Gaussian Mechanism} on privacy guarantees of Gaussian mechanisms, and \Cref{lem:Composition of RDP} from \Cref{Appendix_Background} on composition.
Note that in the upsampling variant, more than two vote counts can be changed.
The worst case occurs if all teachers affected by data point $d$ change the same vote counts.
This is because votes for that same class are more expensive than votes for different classes (see the proof of \Cref{Upsampling Sensitivity}).
\end{proof}

We can also compute the data-dependent tight bound (Lemma~\ref{lem:Tight Bound of the GNMax} in~\Cref{Appendix_Background}) for our \personalized PATE variants.
PATE's calculation of the tight bound builds on the loose bound, and is calibrated for a sensitivity of 1 for the specified noise scale $\sigma$.
However, the sensitivity and the noise scale applied by PATE are related.
Therefore, when providing a different sensitivity than 1 to the tight bound calculation, it suffices to re-scale $\sigma$ according to that sensitivity. Our sensitivity values directly correspond to the parameters of our variants of PATE (upsampling duplication factors, participation frequencies in vanishing, and teachers' weights in the weighting method).
%\todo{@Adam, can you proof-read this paragraph for correctness?}\adam{Done.}

%As for the non-\personalized GNMax aggregator, the data-dependent tight bound (see Lemma~\ref{lem:Tight Bound of the GNMax} in~\Cref{Appendix_Background}) can be applied to the \personalized GNMax aggregators where all data sharing the same individual sensitivity also share the same tight bound.
%The tight bound builds up on the loose bound and assumes a sensitivity equal to one.
%Since sensitivity and noise scale are invariantly related to each other as the following corollary indicates, the tight bound can be applied using the noise scale relative to the individual sensitivity \st the latter equals one.
\begin{cor}[Scaling Invariance of the Individual Loose Bound]
\label{cor:Scaling Invariance of the Individual Loose Bound}
Let $c \in \mathbb{R}_+$ be any positive scalar. Let $M$ be an \personalized GNMax aggregator with noise scale $\sigma \in \mathbb{R}_+$ and an individual sensitivity $\Delta_{M, d} \in \mathbb{R}_+$ for some data point $d \in \mathcal{D}$. Furthermore, let $\tilde{M}$ be another \personalized GNMax aggregator with noise scale $\tilde{\sigma} = c \cdot \sigma$ and individual sensitivity $\Delta_{\tilde{M}, d} = c \cdot \Delta_{M, d}$ regarding $d$. Then, $M$ and $M'$ have the same individual loose bound regarding $d$ for any $\alpha \in \mathbb{R}_+ \setminus \{1\}$.
\end{cor}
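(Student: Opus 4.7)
The plan is to prove this corollary by direct substitution into the formula furnished by \Cref{thm:Individual Loose Bound}, since that theorem already reduces the loose bound to a closed-form expression in which the sensitivity and noise scale appear only through the ratio $\Delta^2/\sigma^2$. Consequently, any joint rescaling of $\Delta$ and $\sigma$ by a common factor $c$ leaves that ratio invariant, and so the corollary should follow almost mechanically.

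Concretely, I would first apply \Cref{thm:Individual Loose Bound} to $M$ to obtain that it satisfies $(\alpha, (\Delta_{M,d})^2 \cdot \nicefrac{\alpha}{\sigma^2})$-RDP regarding $d$, for any $\alpha \in \mathbb{R}_+ \setminus \{1\}$. Then I would apply the same theorem to $\tilde{M}$, producing an individual RDP guarantee of
\begin{align*}
\Bigl(\alpha,\; (\Delta_{\tilde{M},d})^2 \cdot \tfrac{\alpha}{\tilde{\sigma}^2}\Bigr)
= \Bigl(\alpha,\; (c \cdot \Delta_{M,d})^2 \cdot \tfrac{\alpha}{(c \cdot \sigma)^2}\Bigr)
= \Bigl(\alpha,\; (\Delta_{M,d})^2 \cdot \tfrac{\alpha}{\sigma^2}\Bigr),
\end{align*}
where in the last equality the factor $c^2$ in the numerator cancels against $c^2$ in the denominator. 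Since both mechanisms yield the same second parameter in their $(\alpha, \cdot)$-RDP guarantee for every admissible $\alpha$, they share the same individual loose bound regarding $d$.

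I do not expect a genuine obstacle here: the content of the corollary is essentially that the expression $\Delta^2/\sigma^2$ is homogeneous of degree zero under the simultaneous rescaling $(\Delta, \sigma) \mapsto (c\Delta, c\sigma)$, and \Cref{thm:Individual Loose Bound} has already done all the nontrivial privacy accounting. The only thing to be careful about is to invoke \Cref{thm:Individual Loose Bound} with the correct individual sensitivity in each case and to note that the statement is quantified over all $\alpha \in \mathbb{R}_+ \setminus \{1\}$, so the identity of the bounds holds uniformly in $\alpha$ rather than merely pointwise in some restricted range.
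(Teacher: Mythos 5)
Your proposal is correct and follows essentially the same route as the paper: both invoke \Cref{thm:Individual Loose Bound} for each aggregator and verify by direct computation that the factor $c^2$ cancels in $(\Delta_{\tilde{M},d})^2 \cdot \nicefrac{\alpha}{\tilde{\sigma}^2}$, so the two loose bounds coincide for every $\alpha \in \mathbb{R}_+ \setminus \{1\}$. No gaps.
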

\begin{proof}
Fix $\alpha \in \mathbb{R}_+ \setminus \{1\}$. $M, \tilde{M}$ satisfy individual $(\alpha, \varepsilon)$- and $(\alpha, \tilde{\varepsilon})$-RDP, respectively, regarding $d$. The equality of $\varepsilon$  and $\tilde{\varepsilon}$ is verified by direct computation as follows:
\begin{align}
\begin{split}
    \tilde{\varepsilon} & \coloneqq \left(\Delta_{\tilde{M}, d}\right)^2 \cdot \nicefrac{\alpha}{\tilde{\sigma}^2} \\
    & = \left(c \cdot \Delta_{M, d}\right)^2 \cdot \nicefrac{\alpha}{\left(c\cdot\sigma\right)^2} \\
    & = c^2 \cdot \left(\Delta_{M, d}\right)^2 \cdot \nicefrac{\alpha}{c^2 \cdot \sigma^2} \\
    & = \left(\Delta_{M, d}\right)^2 \cdot \nicefrac{\alpha}{\sigma^2} \\
    & \eqqcolon \varepsilon
\end{split}
\end{align}
\end{proof}

Note that all data points from the same privacy group share the same sensitivity, and, thereby, also have the same tight bound.

\section{Experimental Setup}
\label{sec:Experimental Setup}

In this section, we describe the setup for the empirical evaluation of our \personalized PATE variants.
%\new{Next to upsampling and standard weighting, we also include the variant of the weighting approach where some teachers have zero weight in some votings (vanishing).}
Over all experiments, we use the Confident-GNMax algorithm from~\citep{pate_2018}, where the privacy protection is ensured by Gaussian noise within PATE, and labels are only produced if a consensus among the teachers is reached.
To isolate the performance-gain of our \personalized PATE variants, we do not perform additional methods to improve utility of the student model from previous PATE papers, such as virtual adversarial training~\citep{vat} or MixMatch~\citep{berthelot2019mixmatch}.
%---semi-supervised techniques that are applied to PATE improve the accuracy of the PATE student model. However, we contribute improvements to the data generation and aggregation parts of PATE, hence we isolate the performance of the ensemble of models that 
Foregoing these methods allows us for a direct and more precise comparison between standard PATE and our new variants of the framework.
However, as a consequence, our reported student accuracies cannot be compared to the accuracies reported in~\citep{pate_2018}.
\new{Therefore, as a baseline to compare our individualized variants, we implement standard PATE within our framework following~\citep{pate_2018}.}
Our framework includes Gaussian PATE (GNMax, Confident-GNMax, Interactive-GNMax), our proposed \personalized variants, and the support for experimentation is implemented using \textsf{Python} (version 3.8)~\citep{python}.
\new{
Our code can be accessed online.\footnote{ 
\ifthenelse{\boolean{review}}{
\url{https://github.com/secret-pets-submitter/individualized-pate-pets-submission-}
}
%else
{
%\url{https://github.com/secret-pets-submitter/individualized-pate-pets-submission-}
\url{https://github.com/fraboeni/individualized-pate}
}}}

% \subsection{Framework}
% \label{sec:Framework}
% We present a new framework that comprises Gaussian PATE (GNMax, Confident-GNMax, Interactive-GNMax), our proposed \personalized variants, and the support for experimentation. Except for the tight bound and corresponding helper functions which we reuse from~\citep{pate-code}, the whole framework is implemented from scratch using \textsf{Python} (version 3.8)~\citep{python} .

\subsection{Datasets and Models}
\label{sec:Datasets and Models}
We conduct the experiments presented in this section on the MNIST \citep{mnist} and the Adult income dataset \citep{adult}.
MNIST consists of  $70,000$ ($28 \times 28$)-pixel gray-scale images depicting handwritten digits for classification. 
We scale the pixel values of all images to range $[0,1]$.
The Adult income dataset contains $48,842$ tabular data points from the US census of the year 1994. The corresponding classification task is to predict if the yearly income of a person represented in the data is greater than $\$50k$. 
As a pre-processing of the data, we remove $3,620$ damaged data points from the dataset and transformed categorical features into numerical values. 
Furthermore, we normalize these numerical values to the range of zero to one.

To train the teacher and student models on MNIST, we use a simple convolutional neural network (CNN) architecture taken from~\citep{convnet} (see \Cref{tab:convnet}). All weights in the output layer are initialized by values randomly sampled from the Glorot uniform distribution, whereas all other weights are sampled from the He uniform distribution.
Optimization is performed using the Adam optimizer and categorical cross-entropy loss. All other parameters are set according to the default values from \textsf{TensorFlow} (version 2.4.1). 

For the Adult income dataset, the teacher and student models are implemented as random forest models from the \textsf{scikit-learn} library. Each random forest consists of $100$ decision trees.
Otherwise, the default parameters of the library are applied.

\begin{table}[ht]
\centering
\begin{tabular}{|c|c|c|c|}
\hline
layer & type of layer & parameters & activation \\
\hline
1 & convolutional & $32$ $(3, 3)$-kernels & ReLU \\
2 & batch normalization & - & - \\
3 & max pooling & size $(2, 2)$ & - \\
4 & flatten & - & - \\
5 & fully connected & $100$ nodes & ReLU \\
6 & batch normalization & - & - \\
7 & fully connected & $10$ nodes & softmax \\
\hline
\end{tabular}
\caption{CNN-architecture for MNIST.}
\label{tab:convnet}
\end{table}
\vspace{-0.7cm}
Since, as done in standard PATE~\cite{pate_2017}, every teacher is provided with only $240$ data points for training on MNIST, we apply a custom data augmentation within each individual teacher and the student to improve model performances.
Therefore, each data point within one model's training data is randomly rotated by up to $\pm 7.5 \degree$ and randomly shifted by up to $7\%$ both, in horizontal and vertical directions to make a larger training dataset for that model. 
This data augmentation does not influence the privacy costs since we augment the data points only within their respective model's training dataset and not over different datasets.
As a consequence, augmented data points are solely used to train the same model as their original data point.
Since PATE is already based on the assumption that each single data point can completely determine the behavior of a corresponding teacher model (\cf \Cref{lem:Loose Bound of the GNMax} and \Cref{lem:Tight Bound of the GNMax}), no additional DP costs are incurred by augmenting datapoints.
For the experiments based on the Adult income data, no data augmentation is applied since it does not yield any performance benefits.

\subsection{Evaluation Metrics}
\label{sub:evaluation_metrics}

To measure the utility of the different PATE variants and privacy budget distributions, we mainly track \new{three} metrics. First, we count the \emph{number of produced labels} until any of the specified privacy budgets is exhausted. Second, we measure the \emph{accuracy of the student model} trained on that resulting labeled data.
Additionally, we also analyze the \textit{accuracy of the generated labels} ("voting accuracy").
As baselines to compare our \personalized methods to, we conduct experiments with standard non-\personalized PATE and Confident-GNMax using as the dataset-wide~$\varepsilon$ the \emph{minimum} privacy budget encountered in the sensitive data.
This has to be done in order not to violate any training data point's privacy requirements.
%The maximum and average baselines constitute the cases of all sensitive data having the maximum or the average budget among individual privacy budgets in \personalized experiments and thereby provide natural reference values.

\subsection{PATE Experiments}
\label{sec:PATE Experiments}
To experimentally evaluate our \new{two} novel \personalized PATE variants, we carry out the empirical analysis in four steps. 
(1) At first, the complete dataset is randomly divided into private, public, and test partitions.
Note that for increased randomization over the experiments, we do not rely on the standard train-test split in MNIST, but instead combine all $70,000$ data points and then partition the dataset.
The sizes of these partitions as well as general parameters for Confident-GNMax and its \personalized variants on both datasets are described in~\Cref{tab:parameters}, \new{where we follow the setup from PATE~\cite{pate_2018} in terms of the number of data points per set.} 
%These parameters were the basis for all \personalized experiments as they applied the Confident-GNMax approach as well. 
The parameters are adopted in the upsampling mechanism\out~so that teacher accuracies and voting accuracies align with those of weighting and non-\personalized experiments. (2) Privacy budgets are randomly assigned to the private data according to a given privacy budget distribution. Afterward, the data is allocated to the corresponding teacher models for training. (3) The trained teachers are used to produce labels in the voting process. Aggregation of the teacher votes is conducted according to the PATE variant under evaluation. As a baseline to evaluate our \new{two} variants, we use the standard non-\personalized Confident-GNMax. After every voting, the current accumulated RDP costs of data points are computed and stored group-wise. For \emph{upsampling}, all data points that share the same number of duplicates have the same privacy costs whereas in\out~\emph{weighting}, all data points that are learned by teachers of the same weight exhibit the same privacy costs. We consider all-natural RDP $\alpha$ values from $2$ to $50$. These RDP costs are transformed into standard DP costs by taking the best $\alpha$ at that point of the voting. After $2,000$ produced labels, the voting process is terminated since we observe that all experiments could exhaust their privacy budget within that number. Tracking privacy costs above the actual budget exhaustion up to the fixed number of $2,000$ generated labels is done to compare the privacy costs are spent over many votings. 
(4) The student model is trained on the labeled data that the respective teacher ensemble produced until any private data point's privacy budget is exceeded.
To get more reliable results, we average our measurements in all following experiments over multiple runs for the same parameters with the different random initialization, and for data shuffling and noise invoked.

\begin{table*}[t]
\centering
{
\begin{tabular}{|l|c|cccc|c|c|c|c|}
\hline
dataset & \# teachers & \# data & private & public & test & $\sigma_T$ & $\sigma$ & $T$ & $\delta$ \\
\hline
MNIST & 250 && 60,000 & 9,000 & 1,000 & 150 & 40 & 200 & $10^{-5}$ \\
Adult & 250 && 37,222 & 7,000 & 1,000 & 200 & 40 & 300 & $10^{-5}$ \\
\hline
\end{tabular}
}
\caption{\textbf{PATE Parameters}. Parameters used in the experiments for the Confident-GNMax on the MNIST and Adult income datasets.
$\sigma$ is the standard deviation of the noise-induced to the label aggregation.
$\sigma_T$ specifies the standard deviation of noise used to check if the teachers have a consensus given the threshold $T$.
}
\label{tab:parameters}
\end{table*}

\subsubsection{Uniform Assignment of Privacy Budgets}
\label{sub:exp_random_assignment}
We conduct our first set of experiments on both datasets with various privacy budget distributions. \new{We use two privacy groups in the experiments reported in this section as a micro-analysis to clearly show the differences between our variants, and to avoid a combinatorial explosion of privacy budgets and distributions being depicted.} We assign one of two different budgets (a higher and a lower budget) to every data point in the private dataset at random.
We vary the ratio of data points having the higher budget among $25\%$, $50\%$, and $75\%$.
The lower budget is set to $\log 2 \approx 0.69$ over all experiments while we assign the higher budget from $\log 4$, $\log 8$, and $\log 16$. Using logarithmic values provides a more intuitive comparison among the privacy budgets since the formulation of DP (\Cref{df:Differential Privacy}) uses $\exp(\varepsilon)$.
Hence, an $\varepsilon = \log s$ for any real $s \geq 1$ is half of a privacy budget $\varepsilon' = \log 2s$.
For example with our chosen budgets, a budget of $\varepsilon = \log 8$ is four times as high as a budget of $\varepsilon' = \log 2$.
The data and resulting labels that are produced until any data point's privacy budget is exhausted are used to train the student models.

\subsubsection{Non-Uniform Assignment of Privacy Budgets}
\label{sub:exp_class_assignment}
In the previous experiment, we assign data points randomly to the given privacy groups and their respective privacy budgets.
However, this might not necessarily reflect real-world use-cases where individuals' privacy requirements can correlate with their characteristics, for example which class they belong to.
In \personalized PATE, a data point's privacy budget determines how much information that data point can contribute to the voting.
As a consequence, when individuals with specific characteristics, or individuals from a specific class have much higher or much lower privacy requirements than other individuals, this can introduce biases to the generated labels, and thereby, also to the student model.

In this experiment, we, therefore, evaluate how the performance of our \personalized PATE variants is influenced when the privacy budget distributions vary significantly between different classes.
To do so, we use the Adults income dataset, which has an unbalanced class distribution (incomes lower than $\$50k$ make $75.2\%$ of the dataset).
We assign the higher privacy budget solely to the underrepresented high-income class to determine to what extent this shifts the trained student model's predictions.
The higher privacy budgets are again set to $\log 4$, $\log 8$, and $\log 16$ , the lower budget to $\log 2$.
We vary the ratio of data in the underrepresented class that receives the higher privacy budget among $25\%$, $50\%$, $75\%$, and $100\%$.
To produce more reliable results for each budget and ratio combination, we train ten teacher ensembles, use each of them for five voting processes, and report the average.
The data and corresponding labels that are produced until any data point's privacy budget is exceeded represent the student model's training data.

We restrict ourselves to the upsampling mechanism for our evaluation, as in\out~weighting PATE, teachers are trained on data points with the same privacy budget.
Since, in this experiment, we assign privacy budgets according to the classes, most teachers would be trained on data from solely one class.
This would result in poor teacher performance.

% In our second set of experiments, we, therefore, set out to investigate whether the performance of the PATE algorithm could be improved by increasing the privacy budget for only small parts of the data---if that data is expressive, \ie from the underrepresented class.
% If so, larger parts of the data could be awarded high privacy protection, while achieving improved model utility.

% In contrast to the previously described experiments, only the upsampling approach is used for these experiments.
% This is due to the fact that for vanishing and weighting PATE, only data points with the same budget are given to the same teacher(s).
% But then, either the teachers are exclusively trained on points of the same class or those of the more frequent class need to have the higher budget as well.
% In the first case, teachers would perform poorly while the second case would not fit our intended analysis.

\section{Empirical Results}
\label{sec:Results}
%This section presents the results of our experiments described in the previous section.
We present the quantitative effects of \personalization in PATE based on the number of produced labels and the accuracy of the generated student models (as our evaluation metrics described in~\Cref{sub:evaluation_metrics}).

\subsection{Advantage of Individualization}
\label{sec:Advantage of Personalization}

To better understand how much privacy the generation of labels consumes on both data groups (lower and higher privacy), we track the privacy costs over the course of generating $2,000$ labels on both datasets for our \out novel PATE mechanisms.
\new{\Cref{fig:cost_history_mnist} showcases the continuous privacy costs of generating labels for the MNIST dataset using the upsampling mechanism over the respective data groups.
Therefore, $50$\% of the data points (randomly chosen) are assigned the lower privacy budget of $\log 2$.
The remaining $50$\% are assigned $\log 8$.
As a baseline, we plot the continuous privacy costs for standard PATE.
}
\out
To evaluate the number of labels that can actually be generated for the given privacy budget distribution, we have to count how many labels are returned before any data point's privacy budget is exceeded.
In \Cref{fig:cost_history_mnist}, this corresponds to the moment when either the lower costs reach the lower budget or the higher costs reach the higher budget, whatever happens first.
\new{In the setup depicted in \Cref{fig:cost_history_mnist}, our individualized PATE is able to generate more than three times the number of labels generated by standard PATE (890 vs 257).}

\begin{figure*}[ht]
\makebox[\textwidth][c]{\includegraphics[width=0.7\textwidth, trim=0cm 0cm 0cm 0cm]
{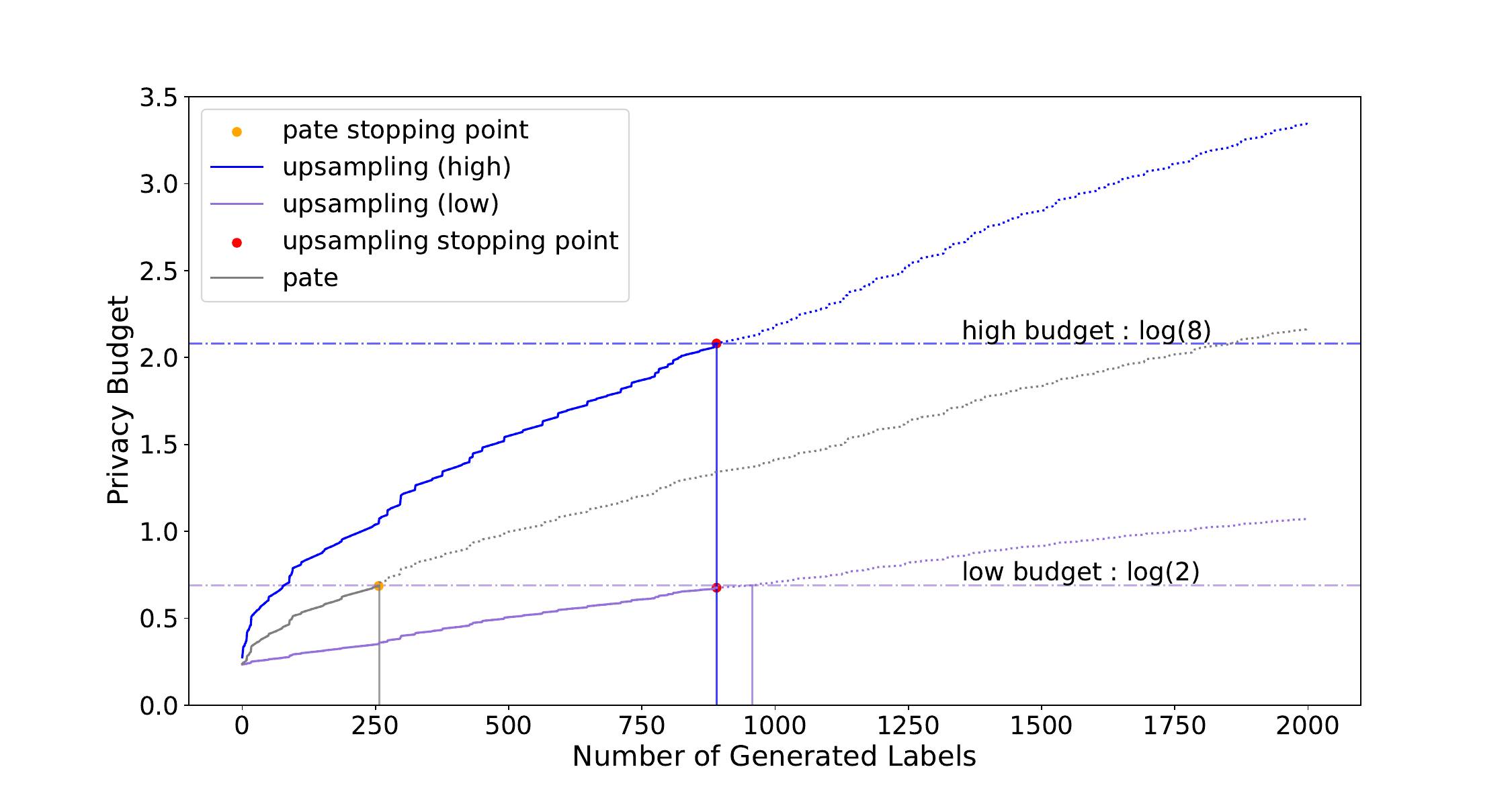}}
\vspace{-0.8cm}
\caption{\textbf{\Personalized upsampling outperforms standard PATE.} We compare the number of generated labels for given privacy budgets: $\log 2$ and $\log 8$. The lines for upsampling (high) and (low) and PATE (denoted as pate) represent the privacy costs during the generation of the first $2,000$ labels \new{for the MNIST dataset}. Standard PATE generates only 257 labels before the low privacy budget of $\log 2$ is reached (pate stopping point). The upsampling method exhausts its privacy budgets much later than PATE since it takes advantage of different privacy budgets. The upsampling (high) exhausts its budget (of $\log 8$) at 890 labels (upsampling stopping point). The upsampling (low) exhausts its budget (of $\log 2$) at 957 labels. Overall, the upsampling method exhausts its budget (upsampling stopping point) at the minimum of the two privacy groups ($\log 2$ and $\log 8$), therefore returning 890 generated labels, which is more than 3 times of the number of labels returned by standard PATE. For the upsampling method, we have to adjust its parameters (the number of times the data points from different privacy groups are upsampled) so that the different privacy budgets are exhausted at approximately the same number of generated labels.
} 
\label{fig:cost_history_mnist}
\end{figure*}

% \iffalse
%     \caption{\textbf{Privacy Cost History (MNIST)}. Costs for generating the first $2,000$ labels. Results are averaged over five voting processes by ten teacher ensembles, each for different budget distributions and GNMax variants upsampling, vanishing, and weighting. Privacy costs and budgets are given in $(\varepsilon, \delta)$-DP for $\delta = 10^{-5}$. Ratios indicate the proportion of data with a higher budget. Costs are listed per group of data points sharing the same budget.}
% \fi

%and \Cref{fig:cost_history_adult} (\Cref{Additional Results}) depict the results over various \personalized privacy distributions for the MNIST and Adult income dataset, respectively.
%For comparison, we also plot the privacy costs for the standard (non-\personalized) PATE.

For more extensive results on \new{the privacy budget consumption for label generation on }MNIST and Adult, see \Cref{fig:cost_history_mnist_large} and \Cref{fig:cost_history_adult} in \Cref{Additional Results}, respectively.
The resulting numbers of produced labels over the experiments are shown in \Cref{tab:n_labels_mnist_short} for the MNIST dataset and in \Cref{tab:n_labels_adult} in the Appendix for Adult income.

\new{In the results, we observe several different trends.
First, when analyzing the lines corresponding to the privacy costs in \Cref{fig:cost_history_mnist_large} and \Cref{fig:cost_history_adult}, we find that both lines differ more, the more the individual budgets differ.
This effect also increases when the proportion of sensitive data with a higher budget decreases.
Second, with an increasing ratio of the higher budget, both costs grow slower, resulting in more generated labels, see \Cref{tab:n_labels_mnist_short}.}
%Another observation is that the cost histories of the uGNMax and the wGNMax almost behave equally for all budget distributions.
%In comparison to the minimum non-\personalized PATE baseline, upsampling and weighting produce more labels while vanishing produces significantly fewer labels.
Thereby, the utility advantage of our \personalized PATE over the non-\personalized standard variant becomes visible.
For half of the sensitive MNIST data having a budget of $\log 4$, $\log 8$, or $\log 16$ and the other half having $\log 2$, $492$, $890$, and $1239$ labels can be produced by our weighting mechanism, respectively, instead of $257$ in the case of non-\personalized PATE.
This leads to a student accuracies of $93.08\%$, $94.68\%$, and $96.32\%$ while non-\personalized PATE only achieves $88.7\%$.
Analogously, on Adult income, in the same privacy budget configuration, $203$, $349$, and $530$ labels can be produced, leading to student accuracies of $81.76\%$, $82.60\%$, and $82.84\%$, respectively for weighting.
Standard non-\personalized PATE, instead, produces $88$ labels so that the student only achieves an accuracy of $79.85\%$.
\new{For a detailed overview on the final students' accuracies for MNIST and Adult with the different \personalized variants and privacy budget distributions, see \Cref{tab:accuracies_mnist} and \Cref{tab:accuracies_adult} in \Cref{Additional Results}.}

%FYI  @Franziska Boenisch in the 2nd PATE paper their accuracy is around 91% but their epsilon is much larger, namely 5.
Our results are not directly comparable to those \citep{pate_2018} since, in contrast to their work, we do not apply virtual adversarial training but only use the public data.
Their final models' accuracy is $98.5\%$ on MNIST with $\varepsilon = 1.97$ while our student model never surpassed $97\%$ even for a privacy budget of $\varepsilon = \log 16 \approx 2.77$ on $75\%$ of the data.
We decided not to integrate the adversarial training method in order to study the pure effect of our \personalization and exclude any other effects on the resulting model utility.

However, our individualization still outperforms~\citep{pate_2018}  when it comes to the voting accuracy, \ie the proportion of correctly generated labels: 
%In \citep{pate_2018}, $286$ labels are produced on the MNIST dataset with $\varepsilon = 1.97$, while our non-\personalized PATE produces more than $1500$ labels for the same privacy budget (see gray line in \Cref{fig:cost_history_mnist} and when it would reach the privacy budget of roughly $1.97$ on the y-axis).
Our generated labels are more accurate ($\approx 97.7\%$) than theirs ($93.18\%$) when evaluating them against ground truth, which is partly due to the better accuracy of our teachers.
Our teachers achieve an average test accuracy of $90.2\%$ ($81.7\%$ on Adult) on average while theirs are at $83.86\%$ ($83.18\%$ on Adult).
%We also report the student accuracies for MNIST and Adults in~\Cref{tab:accuracies_mnist}, and \Cref{tab:accuracies_adult}, respectively in the \Cref{Additional Results}.

Note that the budget combinations $\log 4$ with $75\%$ and $\log 8$ with $25\%$ yield the same average privacy budget over the entire dataset. 
Nevertheless, the experiment on distribution $\log 4$ with $75\%$ yields more labels and higher accuracy than that on $\log 8$ with $25\%$, see \Cref{tab:n_labels_mnist_short}.
This might indicate that having a smaller gap between the lower and the higher privacy budget leads to increased performance and that it might be better to have more data points with slightly higher privacy budgets than a few data points with very high privacy budgets.

% \begin{table}[ht]
% \small
% \centering
% {
% \begin{tabular}{|c|rrr|rrr|rrr|}
% \hline
% higher & \multicolumn{3}{c|}{25\% ratio} & \multicolumn{3}{c|}{50\% ratio} &
% \multicolumn{3}{c|}{75\% ratio}\\
% budget in $\varepsilon$ & u & v & w & u & v & w & u & v & w\\
% \hline
% $\log$ 4 & 158 & \color{lightgray} 25 & 158 & 237 & \color{lightgray} 67 & 239 & 326 & 157 & 326\\
% \hline
% $\log$ 8 & 231 & \color{lightgray} 0 & 229 & 414 & \color{lightgray} 84 & 414 & 636 & 285 & 638\\
% \hline
% $\log$ 16 & 308 & \color{lightgray} 0 & 308 & 623 & 115 & 623 & 1,038 & 477 & 1,041\\
% \hline
% baseline & \multicolumn{4}{c}{} & 99 & \multicolumn{4}{c|}{}\\
% \bottomrule
% \end{tabular}
% }
% \caption{\textbf{Labels per \Personalization (MNIST)}. Number of produced labels. Results depict the average over five voting processes by ten teacher ensembles each for different budget distributions and GNMax variants (\textbf{u}psampling, \textbf{v}anishing, \textbf{w}eighting). Non-\personalized GNMax using the minimum budget serves as baseline. The voting accuracies of all experiments are $\approx 97.7\%$. Inherent inefficiencies of vanishing PATE led to exhaustion before any label was produced in two cases.}
% \label{tab:n_labels_mnist_short}
% \end{table}

\begin{table}[hb]
%\small
\small
\centering
{
\begin{tabular}{|c|rr|rr|rr|}
%\toprule
\hline
higher & \multicolumn{2}{c|}{25\% ratio} & \multicolumn{2}{c|}{50\% ratio} & \multicolumn{2}{c|}{75\% ratio}\\
\cline{2-7}
budget in $\varepsilon$ & U  & W & U  & W & U  & W \\
\hline
$\log$ 4 & 158 & \textbf{433} & 237 & \textbf{492} & 326 &  \textbf{564} \\
\hline
$\log$ 8 & 231 & \textbf{474} & 414 & \textbf{890} & 636 &  \textbf{1163} \\
\hline
$\log$ 16 & 308 & \textbf{648} & 623 & \textbf{1239} & 1038 & \textbf{1787} \\
\hline
\textit{baseline} & \multicolumn{2}{c}{} & \multicolumn{2}{c}{\textit{257}} & \multicolumn{2}{c|}{}\\
\hline
\end{tabular}
}
%%% Results with Vanishing
% {
% \begin{tabular}{|c|rrr|rrr|rrr|}
% %\toprule
% \hline
% higher & \multicolumn{3}{c|}{25\% ratio} & \multicolumn{3}{c|}{50\% ratio} & \multicolumn{3}{c|}{75\% ratio}\\
% \cline{2-10}
% budget in $\varepsilon$ & U & V & W & U & V & W & U & V & W \\
% \hline
% $\log$ 4 & 158 & 25 & \textbf{433} & 237 & 67 & \textbf{492} & 326 & 157 & \textbf{564} \\
% \hline
% $\log$ 8 & 231 & 22 & \textbf{474} & 414 & 84 & \textbf{890} & 636 & 285 & \textbf{1163} \\
% \hline
% $\log$ 16 & 308 & 28 & \textbf{648} & 623 & 115 & \textbf{1239} & 1038 & 477 & \textbf{1787} \\
% \hline
% \textit{baseline} & \multicolumn{4}{c}{} & {\textit{257}} & \multicolumn{4}{c|}{}\\
% \hline
% \end{tabular}
% }
\caption{\textbf{Number of labels returned by \personalized PATE (\textbf{\underline{U}}psampling and \textbf{\underline{W}}eighting).} Non-\personalized GNMax using the minimum budget of $\log$ 2 serves as baseline. 
%For each privacy group, we report the average over five votings and ten different teacher ensembles. 
The voting accuracies for all methods are $\approx 97.7\%$. %Inherent inefficiencies of vanishing PATE lead to exhaustion before any label is produced in two cases of 25\% ratio.
}
\label{tab:n_labels_mnist_short}
\end{table}

\iffalse
version before our reweighting algo:
    \begin{tabular}{|c|rrr|rrr|rrr|}
    %\toprule
    \hline
    higher & \multicolumn{3}{c|}{25\% ratio} & \multicolumn{3}{c|}{50\% ratio} & \multicolumn{3}{c|}{75\% ratio}\\
    \cline{2-10}
    budget in $\varepsilon$ & U & V & W & U & V & W & U & V & W \\
    \hline
    $\log$ 4 & \textbf{158} & 25 & \textbf{158} & 237 & 67 & \textbf{239} & \textbf{326} & 157 & \textbf{326} \\
    \hline
    $\log$ 8 & \textbf{231} & 0 & 229 & \textbf{414} & 84 & \textbf{414} & 636 & 285 & \textbf{638} \\
    \hline
    $\log$ 16 & \textbf{308} & 0 & \textbf{308} & \textbf{623} & 115 & \textbf{623} & 1,038 & 477 & \textbf{1,041} \\
    \hline
    \textit{baseline} & \multicolumn{4}{c}{} & \textit{99} & \multicolumn{4}{c|}{}\\
    \hline
    \end{tabular}
\fi

\subsection{Generated Labels as a Function of Privacy and Relative Group Size}
\label{sec:Scaling generated labels}

We observe that there are two main factors that allow the individualized PATE algorithm to increase the number of labels that are generated: the number of individuals that have a larger privacy budget, and the actual size of the non-minimum privacy budget. Either increasing the number of individuals that have a larger privacy budget or increasing the larger privacy budgets, allows our individualized PATE to incur a smaller privacy cost on the most privacy-conscious group. 

We run an experiment measuring the number of generated labels for a series of budget combinations $((1.,2.), (1.,3.) ...)$, and for the group distributions $((25\%, 75\%), (50\%, 50\%) , (75\%, 25\%))$. We find that the relationship between the contributions of these two is in fact linear. Scaling up the number of individuals that have a large privacy budget, while equivalently scaling down the privacy budget of that group, keeps the number of generated labels roughly equivalent; and vice-versa. We find this effect to be significant for both our upsampling and weighting mechanism. A more detailed analysis of how to select a scaling, given a privacy-ratio, when group size is fixed is given in~\Cref{fig:find_ratios}.

Note that we implement our individualization through the algorithms from~\Cref{sec:Personalized Extensions for PATE}.
Using a different approach to implement our variants would change the curve. 
Hence, \Cref{fig:weighted_ratios_of_label_counts} allows us to assess the selection of hyperparameters (upsampling factors\out~and teachers' weights) for our variants of PATE. 
The lower the curve-of-best-fit for an algorithm is, the better it is at utilizing differences in privacy budgets, to generate more labels. 
%\todo{question - I am using 25\% - 75\% to signify 25\% of the small class}
 
\begin{figure}[t]
\includegraphics[width=0.7\linewidth, trim=0cm 0cm 0cm 0cm]
{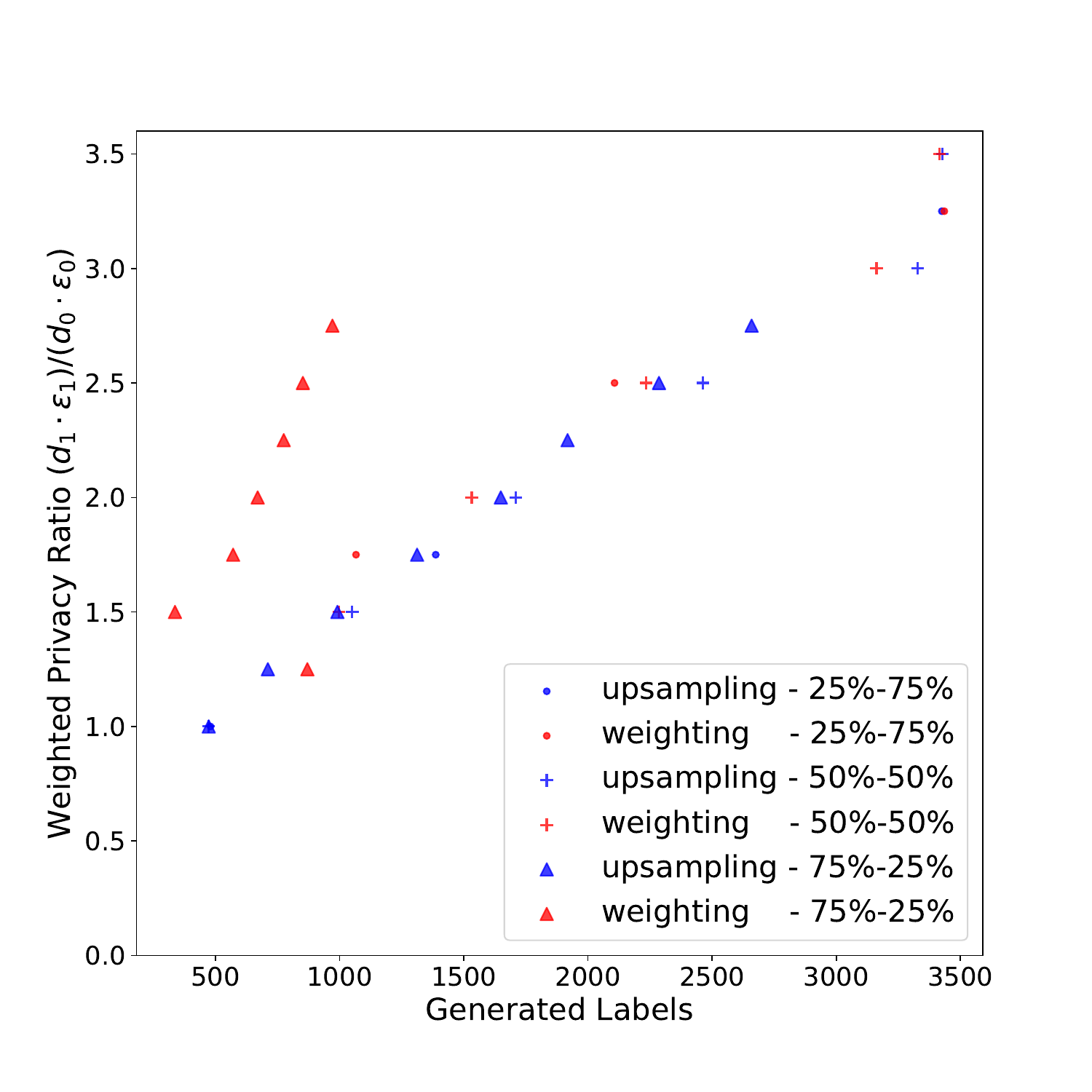}
\vspace{-0.7cm}
\caption{\textbf{Weighted ratio of generated labels}. The y-axis is the weighted privacy ratio, where the contribution of the privacy groups is scaled by the group size, for two different privacy groups this is $(\varepsilon_1 \cdot d_1) / (\varepsilon_0 \cdot d_0)$. There is a nearly-linear relationship between the number of generated labels, and the weighted privacy ratio. } 
\label{fig:weighted_ratios_of_label_counts}
\end{figure}   
%\vskip -0.2in
%\todo{@Roy: describe the y-axis, make the line of the best fit, make the labels larger, 
% reference appendix b
% don't use the word "line of best fit"
% }

\vspace{-0.1cm}
%\subsection{Non-Random Assignment of Privacy Budgets}
\subsection{Non-Uniform Privacy Budgets}
\label{sec:Compensating Underrepresented Data}

\begin{table*}[ht]
\centering
%\resizebox{\textwidth}{!}{%
\new{
\begin{tabular}{|c|rr|rr|rr|rr|}
\hline
higher & \multicolumn{2}{c|}{25\% ratio} & \multicolumn{2}{c|}{50\% ratio} & \multicolumn{2}{c|}{75\% ratio} & \multicolumn{2}{c|}{100\% ratio}\\
budget in $\varepsilon$ & low & high & low & high & low & high & low & high\\
\hline
$\log$ 4 & 95.93 & 36.77 & 93.11 & 45.93 & 90.13 & 54.74 & 86.24 & 63.39\\
\hline
$\log$ 8 & 93.25 & 45.91 & 86.82 & 62.25 & 80.57 & 72.61 & 77.91 & 77.36\\
\hline
$\log$ 16 & 90.42 & 54.00 & 80.74 & 72.68 & 76.68 & 79.42 & 73.82 &  83.59\\
\hline
\textit{baseline} & \multicolumn{8}{c|}{(98.01, 24.78)} \\
%\color{lightgray} baselines &&& \multicolumn{2}{c}{\color{lightgray} accuracy 86.18} && \multicolumn{2}{c}{\color{lightgray} precision 69.41} && \multicolumn{2}{c}{\color{lightgray} recall 48.82} &&\\
\hline
\end{tabular}
%}
\caption{
\new{\textbf{Per-group student accuracy (in $\%$) for unbalanced \personalization (Adult).} Results reported for \emph{low}-income (majority) and \emph{high}-income (underrepresented) class as an average over 50 different students trained through five voting processes by ten teacher ensembles using \textbf{upsampling}.
The ratios specify what proportion of the underrepresented class was assigned the higher privacy budget. 
The remaining data obtained a privacy budget of $\log 2$.
Non-\personalized experiments with all data points having a privacy budget of $\log 2$ serve as the baseline.
As the proportion of data points from the  underrepresented class that obtain a higher privacy budget (or their respective budget) increases, we observe an increase in student accuracy on this class.
At the same time, the student accuracy on the majority class decreases.
}
\label{tab:student_acc_skew}
}
}

\end{table*}

The experiment in this section serves to evaluate the influence of assigning a higher privacy budget only to (parts of) the underrepresented class in the Adult income dataset.
\new{
We analyse the effects on both the teachers and the student model and on the generated labels.
\Cref{tab:student_acc_skew} highlights how assigning higher privacy budgets to different proportions of the underrepresented class causes changes in the resulting student models' predictions.
We observe that with larger proportions of data from the underrepresented class that receive a high privacy budget, the resulting student model's accuracy on this underrepresented class increases. 
At the same time, the student model's accuracy on the majority class decreases significantly.
The same holds when increasing the privacy budget on the underrepresented class.
}
%The numbers of produced labels are reported in~\Cref{tab:n_labels_skew} in \Cref{Additional Results}.
%We observe similar relations and trends as for the experiments described in \Cref{sec:Advantage of Personalization}.
%However, when it comes to accuracies, precision, and recall values of the resulting models, we observe significant differences, see~\Cref{tab:voting_acc_prec_rec_skew}.
%One main observation is that the accuracies decrease (1) when the fraction of data points from the underrepresented class which receive the higher privacy budget increases, and (2) when their assigned privacy budgets increases.
%For all budget distributions, the precision (ratio of true predictions among all positive predictions) is smaller than that for non-\personalized experiments.
%It also decreases with increasing budgets in the underrepresented class.
%Conversely, the recall (ratio of true predictions among all positives) is higher than that for non-\personalized experiments and increases with increasing budgets.
\new{\Cref{tab:teacher_acc_skew} and \Cref{tab:voting_acc_skew} in \Cref{Additional Results} show similar trends for the average teacher and voting accuracies, respectively.
These observations indicate that the higher the privacy budget of the underrepresented class (\ie, the lower their privacy requirement), or the more data from the underrepresented class requires lower privacy protection, the more frequently that class gets predicted.
This highlights that an \personalized privacy budget assignment is able influence the model predictions, and thereby, to enforce and to mitigate biases in the resulting ML models.
}
\new{
When it comes to label generation for the non-uniform privacy-budget assignment (see \Cref{tab:n_labels_skew} in the \Cref{Additional Results}), we observe an increase in the number of generated labels depending on the fraction of underrepresented data that is assigned a higher privacy budget and the respective budgets.
We also compare the the number of labels generated in this setup with the number of labels generated in the random privacy budget assignment (see \Cref{tab:n_labels_adult} in \Cref{Additional Results}).
The rightmost column of \Cref{tab:n_labels_skew} (100\% of the underrepresented class obtain the higher privacy budget) can be directly compared with the leftmost column in  \Cref{tab:n_labels_adult} where 25\% of the overall data obtain a higher budget.
This is because the underrepresented class represents roughly 25\% of the data.
We observe that the non-uniform assignment of privacy budgets yields fewer labels.
Hence, we can conclude that even when the performance of PATE on the class that receives a higher privacy budget increases, the overall performance decreases.
}

\section{Discussion and Future Work}
\label{sec:Discussion and Future Work}

This section discusses results and implications of this work, and provides an outlook on possible future research directions.

\subsection{Improving Utility with Individualization}
\label{sub:discussion_utility}
Particularly in sensitive domains such as health care, applying high utility ML models is crucial.
This is because incorrect model decisions can have catastrophic consequences.
Since introducing DP into training often yields decreased ML model utility, many parties still entirely forego its adaptation within their sensitive ML applications, or they assign a very high privacy budget for all data points, which results in low privacy protection.

The introduction of our \personalized PATE yields multiple benefits in these scenarios. 
First of all, our methods allow integration of PATE in a system where individual data holders can choose what privacy level they want their data to be treated with.
That option alone might make individuals more willing to share their data, which would result in the availability of more training data for the ML models.
This, in turn, is known to have positive effects on the model utility when training with DP~\citep{tramer2020differentially}.
Additionally, our experiments highlight that our \personalized PATE variants yield more generated labels and higher student model utility than standard PATE which has to comply with the most strict privacy requirements encountered in the training dataset.
%For the MNIST dataset, we observe that when as little as $25\%$ of the data points specify a privacy budget of $\log 4$ instead of the minimum privacy budget of $\log 2$, more than $150\%$ of the original number of labels can be generated (\cf~\Cref{tab:n_labels_mnist_short}).
%When further increasing the privacy budget for that small data population or including more data points with that budget into the training, up to ten times the amount of labels could be generated.
%In general, depending on the privacy budget distributions, even obtaining more labels becomes possible.
%This is due to the fact that the data points with higher privacy budgets can contribute more information while privacy of data points with higher protection requirements can be spared.

\subsection{Comparison of our Variants}
\label{sub:discussion_comparison}
In the practical comparison of our \new{two} PATE variants, we see that the upsampling and weighting variants constantly outperform\out~standard PATE.
Additionally, both variants have different benefits and use-cases: 
The upsampling approach offers high flexibility in terms of individual privacy budget preferences. 
In theory, each data point could require a different privacy budget and would just have to be upscaled accordingly.
In practice, upsampling can increase the computational costs of PATE significantly since more training data is available and more teacher models need to be trained.
Moreover, the upsampling method cannot be used for distributed scenarios where the sensitive training data belong to different parties, such as hospitals that jointly want to train a student model based on their respective patients' data.
This is because, for upsampling, the sensitive data would have to be shared among the different parties which is usually restricted by privacy regulations.
Weighting is well suited for such distributed scenarios because each party can train their own teacher model and assign the weight according to the privacy requirements of their sensitive data.
However, to fully leverage the benefits of weighting, teachers must be trained on data points with the same privacy budgets, which reduces flexibility. 
It is possible to group together data points with different privacy budgets to one teacher within the weighting approach, but then this teacher's weight and corresponding privacy level must be set to comply with the strictest requirement among all its training data points.
Such an assignment  results in a waste of privacy budgets among all data points with higher privacy budgets within this teacher.

%This data can be used to (1) potentially increase the amount of training data per teacher, or (2) increase the number of teachers.
%(1) results in increased teacher accuracies, whereas (2) allows to use higher amounts of noise in the voting process, \ie lower privacy costs, without loss of voting accuracy.
%Both effects stand in direct relation since the more teachers are used, the less training data per teacher is available.
%To preserve the optimal point-to-teacher ratio, more teachers have to be trained on duplicated data.
%Additionally, since duplication of data has to be aligned to relations between this data's different privacy budgets, even points with the lowest budgets might have to be duplicated.
%Detailed insights are explained in \Cref{Implementation Details of the \Personalized Variants}.

One great advantage of our \personalized PATE variants is that\out~\new{their implementation can be configured such that all data points are able to exhaust their privacy budgets at roughly the same time.
This allows to fully make use of each individual data point's privacy budget, and thereby to fully leverage the sensitive training data in order to produce higher-utility ML models.}
%they can be combined during a voting process.
%For example towards the end of a label generation process, if not all privacy budgets are exceeded at the exact same time, the vanishing variant could be applied to exploit the remaining budgets while sparing the exhausted.

\subsection{Individualized Privacy and Biases}
\label{sub:discussion_outliers}
Our experiments on assigning higher privacy budgets to data points from one particular class highlight that \personalized DP guarantees can enforce or mitigate biases in the resulting ML models. 
More concretely, data with higher privacy budgets has a direct influence on what classes the student model predicts.
The higher a data point's privacy budget, the higher its influence on the model's prediction.
Therefore, whenever assigning \personalized privacy budgets, a thorough evaluation of the resulting ML models concerning biases and model fairness needs to be conducted. %needs be evaluated carefully from an angle of biases and model fairness.
Once such negative effects are detected, the privacy budgets of the respective (groups of) data points can be scaled down to reduce their influence.

\subsection{Outlook and Future Directions}
\label{sub:discussion_outlook}
%Based on the insights on the potential of \personalized privacy budgets being able to create or enforce biases in a dataset, future work should focus on combining the privacy and utility aspects of this work with considerations on biases fairness.
%This could be done by balancing information of different parts of the data distribution of interest to satisfy fairness aspects and ultimately to optimize utility.
%Such a balance could, for example, be achieved by not always exploiting the complete privacy budgets available for some data points.

Independent of \personalized privacy guarantees, further theoretical research on improving the tight bound analysis in PATE would be helpful to obtain more realistic estimates of the privacy costs during the voting process. 
The current analysis assumes that each data point can fully change its teacher model's prediction. 
In most scenarios, this assumption is, however, too strong. 
With a tighter estimate of a data point's influence, each vote consumes less privacy budget, and as a consequence, more labels can be produced.

Moreover, it would be of interest to study how applications of distributed PATE and similar frameworks, \eg~CaPC~\citep{choquette2021capc}, can benefit from our \personalized aggregation mechanisms.
Our mechanisms can be applied there to implement both individual data point privacy requirements, but also different "per-party" requirements.
What is more is that in particular for the weighting mechanism, these per-party privacy requirements could be extended to different weighting schemes taking into account, for example, the amount of training data a party holds, how diverse this data is, and how accurate their trained model predicts---assuming these properties can be determined without undermining the privacy guarantees of the system.
Such extensions can then support more meaningful cooperative ML model training and yield models of higher utility.

Finally, in this work, we focus on \personalized extensions of PATE.
Due to its structure, PATE is naturally suited to support different privacy budgets among its training data.
Also, data that does not require any privacy protection can directly be leveraged by the framework as public training data for the student model.
However, in the future, it would also be of interest to develop extensions of DP-SGD to support \personalized privacy guarantees within this framework. 
Such extensions could, for example, be implemented by sub-sampling the model's training data points with non-uniform probabilities according to their privacy budgets, or adding DP noise with different magnitudes to different data points' gradients.
%Finally, it will be interesting to study the behavior of our novel \personalized PATE variants further, also with more complex privacy budget distributions within the dataset.
%We see the biggest challenge for this part in the fine-tuning of the parameters and the combination of the mechanisms such that all privacy budgets among the data points can be completely exhausted for maximal utility. 

\section{Conclusion}
\label{sec:Conclusion}
Preserving privacy for the training data in ML is a crucial topic.
Often, this privacy is achieved at the cost of the final model's utility.
To improve the privacy-utility trade-off and to cater to the requirement encountered among
all data holders, we propose \new{two} novel variants for the PATE algorithm that allow for the use of \personalized privacy budgets among the data points.
We formally define our variants, conduct theoretical analyses of their privacy bounds, and experimentally evaluate their effect on PATE's utility for different datasets and different privacy budget distributions within them.
Our results show that through \personalized PATE, we are able to generate significantly more labels in comparison to standard PATE which has to comply with the highest privacy requirements encountered in its training dataset.
The increased amount of labels also translates into significant improvements in the student model's accuracy.
Our \personalized PATE variants are, therefore, able to reduce the loss of utility that is usually introduced by DP.

\begin{acks}
This work is supported by the German Federal Ministry of Education and Research (grant 16SV8463: WerteRadar).
\end{acks}

% \section{Acknowledgements}
% This work is supported by the German Federal Ministry of Education and Research (grant 16SV8463: WerteRadar).

\newpage
\bibliographystyle{ACM-Reference-Format}
\bibliography{references}
\newpage
\appendix % this marks appendices

\section{Additional Background on RDP and PATE}
\label{Appendix_Background}
This section supplements \Cref{sec:Background} by providing formalizations of Rényi divergence, RDP composition, and the tight bound.

\subsection*{Rényi Differential Privacy}
\begin{df}[\cf \citep{rdp}, Def. 3]%Rényi Divergence
\label{df:Rényi Divergence}
Let $P$ and $Q$ be two probability distributions over $\mathcal{D}$. Rényi divergence of order $\alpha \in \mathbb{R}_+ \setminus \{1\}$ for $P$ and $Q$ can be defined as:
\begin{align}
    \mathbb{D}_\alpha \left[P \parallel Q \right] \coloneqq \frac{1}{\alpha - 1} \cdot \log \underset{x \sim Q}{\mathbb{E}} \left[ \left( \frac{P \left( x \right)}{Q \left(x \right)} \right)^\alpha \right] \; ,
\end{align}
where $x \sim Q$ expresses that samples $x \in \mathcal{D}$ follow the probability distribution $Q$.
\end{df}

Composition under RDP can be expressed as:
\begin{lem}[\cf \citep{rdp}, Prop. 1]%Composition of RDP
\label{lem:Composition of RDP}
Let $\mathcal{R}_1, \mathcal{R}_2$ be arbitrary result spaces. Let further $M_1 \colon \mathcal{D}^* \rightarrow \mathcal{R}_1$, $M_2 \colon \mathcal{D}^* \rightarrow \mathcal{R}_2$ be mechanisms that satisfy $(\alpha, \varepsilon_1)$ and $(\alpha, \varepsilon_2)$-RDP, respectively. Then, the composition $M_3(D) \mapsto (M_1(D), M_2(D))$ satisfies $(\alpha, \varepsilon_1 + \varepsilon_2)$-RDP.
\end{lem}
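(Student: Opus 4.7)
The plan is to reduce the statement to the additivity of the Rényi divergence under product measures. Let $D \sim D'$ be two arbitrary neighboring datasets and fix the order $\alpha \in \mathbb{R}_+ \setminus \{1\}$ from the hypothesis. Writing $f_{M_3(D)}$ for the joint density of the pair $(M_1(D), M_2(D))$ on $\mathcal{R}_1 \times \mathcal{R}_2$, the first step is to observe that because $M_1$ and $M_2$ draw their randomness independently (this is the standard composition model; the adaptive case would require the chain rule version below), the joint density factorizes as
\begin{equation}
f_{M_3(D)}(r_1, r_2) = f_{M_1(D)}(r_1) \cdot f_{M_2(D)}(r_2),
\end{equation}
and similarly for $D'$.

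Next, I would expand $\mathbb{D}_\alpha[f_{M_3(D)} \parallel f_{M_3(D')}]$ using \Cref{df:Rényi Divergence}. Substituting the factorization above, the ratio inside the expectation splits as a product, and by independence the joint expectation under $f_{M_3(D')}$ becomes a product of the two marginal expectations. Taking logarithms turns this product into a sum, yielding
\begin{equation}
\mathbb{D}_\alpha\bigl[f_{M_3(D)} \parallel f_{M_3(D')}\bigr] = \mathbb{D}_\alpha\bigl[f_{M_1(D)} \parallel f_{M_1(D')}\bigr] + \mathbb{D}_\alpha\bigl[f_{M_2(D)} \parallel f_{M_2(D')}\bigr].
\end{equation}
The final step applies the RDP hypotheses on $M_1$ and $M_2$ to bound each term by $\varepsilon_1$ and $\varepsilon_2$ respectively, and since $D \sim D'$ was arbitrary, the claim $(\alpha, \varepsilon_1 + \varepsilon_2)$-RDP for $M_3$ follows.

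The main obstacle is the factorization step: if one wants the more general adaptive composition (where $M_2$ may read the output of $M_1$), independence of densities no longer holds and one must instead argue via a chain rule, writing $f_{M_3(D)}(r_1, r_2) = f_{M_1(D)}(r_1) \cdot f_{M_2(D) \mid M_1(D) = r_1}(r_2)$ and bounding the conditional Rényi divergence pointwise in $r_1$ by $\varepsilon_2$ before recombining. For the non-adaptive statement of \Cref{lem:Composition of RDP}, however, the product argument above suffices and the computation is essentially a one-line manipulation of the definition; the only care needed is to justify the interchange of product and expectation, which is immediate from Fubini since the integrands are nonnegative.
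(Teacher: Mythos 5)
Your proof is correct. The paper itself does not prove this lemma but defers entirely to the cited reference (\citep{rdp}, Prop.~1), and your product-measure argument --- factorizing the joint density, splitting the expectation by Fubini, and taking logarithms to convert the product into a sum of Rényi divergences --- is precisely the standard argument given there; your closing remark about the chain-rule variant also correctly accounts for the adaptive sequential composition that the paper's accompanying note claims.
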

Note that~\Cref{lem:Composition of RDP} also holds for adaptive sequential composition as shown in~\citep{rdp}.

RDP guarantees can be transformed into DP guarantees as follows:

\begin{lem}[\cf \citep{rdp}, Prop. 3]%RDP to DP
\label{lem:RDP to DP}
Let $M \colon \mathcal{D}^* \rightarrow \mathcal{R}$ be an $(\alpha, \varepsilon)$-RDP mechanism. Then, $M$ also satisfies $(\varepsilon', \delta)$-DP with
\begin{align}
    \varepsilon' = \varepsilon + \frac{\ln \nicefrac{1}{\delta}}{\alpha - 1}
\end{align}
for all $\delta \in (0, 1]$.
\end{lem}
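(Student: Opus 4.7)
The plan is to prove the RDP-to-$(\varepsilon',\delta)$-DP conversion by the standard privacy-loss-random-variable technique: split the output space into a ``typical'' region where the privacy loss is controlled, and a ``tail'' region whose probability we bound by $\delta$ using Markov's inequality applied to the Rényi moment bound. Writing $P$ and $Q$ for the densities of $M(D)$ and $M(D')$, I would define the privacy loss random variable $L(x) \coloneqq \log(P(x)/Q(x))$ for $x$ drawn from $P$, and note that the hypothesis $\mathbb{D}_\alpha[P \parallel Q] \leq \varepsilon$ translates directly to the moment bound $\mathbb{E}_{x \sim P}\!\left[e^{(\alpha-1)L(x)}\right] \leq e^{(\alpha-1)\varepsilon}$.

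Next, I would choose a threshold $c$ separating ``typical'' from ``tail'' outputs and set $c \coloneqq \varepsilon + \frac{\ln(1/\delta)}{\alpha-1}$, which is exactly the claimed $\varepsilon'$. For any measurable event $R \subseteq \mathcal{R}$, decompose
\begin{align}
\mathbb{P}[M(D) \in R] = \mathbb{P}[M(D) \in R,\, L \leq c] + \mathbb{P}[M(D) \in R,\, L > c].
\end{align}
On the typical part, the pointwise inequality $P(x) \leq e^{c} Q(x)$ gives $\mathbb{P}[M(D) \in R,\, L \leq c] \leq e^{c}\, \mathbb{P}[M(D') \in R]$. For the tail part, Markov's inequality applied to the nonnegative variable $e^{(\alpha-1)L}$ under $P$ yields
\begin{align}
\mathbb{P}_{x \sim P}[L(x) > c] \leq \frac{\mathbb{E}_{x \sim P}\!\left[e^{(\alpha-1)L(x)}\right]}{e^{(\alpha-1)c}} \leq e^{(\alpha-1)(\varepsilon - c)},
\end{align}
and by the choice of $c$ this equals $\delta$. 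Combining the two contributions gives $\mathbb{P}[M(D) \in R] \leq e^{\varepsilon'} \mathbb{P}[M(D') \in R] + \delta$, which is the desired $(\varepsilon',\delta)$-DP guarantee, and swapping the roles of $D$ and $D'$ gives the symmetric inequality.

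The main obstacle is really a subtlety rather than a calculation: the moment bound from the RDP hypothesis is stated in expectation under $P$, whereas the DP inequality is about a probability under $P$, so I must be careful that Markov is applied to $e^{(\alpha-1)L}$ rather than to $L$ itself, and that the threshold $c$ is calibrated so the ``surplus'' factor $e^{-(\alpha-1)c}$ kills the Rényi moment down to exactly $\delta$. A minor additional care-point is the case $\alpha \in (0,1)$ if one wishes to cover it, but the statement fixes $\alpha > 1$ implicitly through $\alpha - 1 > 0$ in the denominator of $\varepsilon'$; I would note this and restrict attention to $\alpha > 1$. Beyond that the argument is essentially bookkeeping, and the full computation already appears in \citep{rdp}, Prop.~3, so I would refer the reader there for the remaining details.
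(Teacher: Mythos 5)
Your argument is correct, but note that the paper itself does not prove this lemma at all: it defers entirely to \citep{rdp}, stating only that Lemma~3 is proven there. So the right comparison is with Mironov's Proposition~3, and there your route genuinely differs. Mironov works directly with events: from $\mathbb{D}_\alpha\left[P \parallel Q\right] \leq \varepsilon$ he derives the probability-preservation inequality $\mathbb{P}\left[M(D) \in R\right] \leq \left(e^{\varepsilon} \cdot \mathbb{P}\left[M(D') \in R\right]\right)^{\nicefrac{(\alpha-1)}{\alpha}}$ via H\"older's inequality, and then splits into cases according to whether $\mathbb{P}\left[M(D') \in R\right]$ is small (in which case the right-hand side is already below $\delta$) or large (in which case the multiplicative bound with $\varepsilon'$ holds). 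Your proof instead introduces the privacy-loss random variable and applies Markov's inequality to $e^{(\alpha-1)L}$, calibrating the threshold $c = \varepsilon'$ so the tail mass is exactly $\delta$; this is the moments-accountant-style conversion and it is equally valid, yielding the same constants. What each buys: your version makes $\delta$ transparently interpretable as the probability that the privacy loss exceeds $\varepsilon'$, and it is the template that extends to tighter tail-based conversions; Mironov's version avoids densities and the loss variable, working purely with event probabilities, which sidesteps measure-theoretic bookkeeping. On that last point, one small gap in your write-up: for $L = \log(P/Q)$ to be defined $P$-almost surely you need $P \ll Q$, which does hold here but deserves a sentence --- finiteness of $\mathbb{D}_\alpha\left[P \parallel Q\right]$ for $\alpha > 1$ forces absolute continuity, and any residual mass where $Q$ vanishes would lie in the tail event anyway. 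You are also right to restrict to $\alpha > 1$: the paper's Definition~2 formally admits $\alpha \in (0,1)$, but both the claimed formula and the cited proposition in \citep{rdp} require $\alpha > 1$, so flagging this is a genuine (if minor) improvement over the paper's presentation.
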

%\Cref{lem:Composition of RDP} and \Cref{lem:RDP to DP} are proved in \citep{rdp}.
\Cref{lem:RDP to DP} is proved in~\citep{rdp}.

\subsection*{Tight Bound Privacy Analysis of PATE}
The tight bound for PATE can be defined as follows.
\begin{lem}[\cf \citep{pate_2018}, Thm. 6]%Tight Bound of the GNMax
\label{lem:Tight Bound of the GNMax}
Let $M$ simultaneously satisfy $(\alpha_1, \varepsilon_1)$-RDP and $(\alpha_2, \varepsilon_2)$-RDP. Both RDP bounds can be computed by applying the loose bound for two different alpha values. Suppose that $1 \geq q \geq \mathbb{P} [M(D) \neq j^*]$ holds for a likely teacher voting $j^*$. Additionally suppose that $\alpha \leq \alpha_1$ and $q \leq
\exp((\alpha_2 - 1) \cdot \varepsilon_2) / \left(\frac{\alpha_1}{\alpha_1 - 1} \cdot \frac{\alpha_2}{\alpha_2 - 1} \right)^{\alpha_2}$. Then, $M$ satisfies $(\alpha, \varepsilon)$-RDP for any neighboring dataset $D'$ of $D$ with
\begin{align}
    \varepsilon = \frac{1}{\alpha - 1} \cdot \log \left( (1 - q) \cdot A + q \cdot B \right) \, ,
\end{align}
where $A$ and $B$ are defined as follows:
\begin{eqnarray}
    A & \coloneqq & \left( \frac{1 - q}{1 - \left(q \cdot e^{\varepsilon_2} \right)^{\frac{\alpha_2 - 1}{\alpha_2}}} \right)^{\alpha - 1} \, ,\\
    B & \coloneqq & \left( \frac{e^{\varepsilon_1}}{q^{\frac{1}{\alpha_1 - 1}}} \right)^{\alpha - 1} \, .
\end{eqnarray}
\end{lem}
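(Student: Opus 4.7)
The plan is to bound the R\'enyi divergence $\mathbb{D}_{\alpha}[f_{M(D)} \parallel f_{M(D')}] = \tfrac{1}{\alpha-1}\log\sum_{y \in \mathcal{Y}} P(y)^{\alpha} Q(y)^{1-\alpha}$ (with shorthand $P = f_{M(D)}$, $Q = f_{M(D')}$) by splitting the defining sum around the likely vote $j^{*}$ and applying one RDP hypothesis to each piece. The advertised form $(1-q)\cdot A + q \cdot B$ essentially dictates the bookkeeping: the $j^{*}$-term will match $(1-q)\cdot A$ and be controlled by $(\alpha_{2},\varepsilon_{2})$, while the tail $\sum_{y \neq j^{*}}$ will match $q \cdot B$ and be controlled by $(\alpha_{1},\varepsilon_{1})$.

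For the $j^{*}$-term, write $p = P(j^{*})$, $p' = Q(j^{*})$ so that this piece equals $p^{\alpha}(p')^{1-\alpha}$. By the post-processing property of RDP, the hypothesis $(\alpha_{2},\varepsilon_{2})$ descends to the Bernoulli pair $(\mathrm{Ber}(1-p'), \mathrm{Ber}(1-p))$, giving $(1-p')^{\alpha_{2}}(1-p)^{1-\alpha_{2}} + (p')^{\alpha_{2}} p^{1-\alpha_{2}} \le e^{(\alpha_{2}-1)\varepsilon_{2}}$. Dropping the second (non-negative) summand and using $1-p \le q$ isolates $1 - p' \le (q \, e^{\varepsilon_{2}})^{(\alpha_{2}-1)/\alpha_{2}}$, which is exactly the denominator appearing inside $A$. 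A monotonicity check on $p \mapsto p^{\alpha}\bigl(p'(p)\bigr)^{1-\alpha}$ over the feasibility interval $p \in [1-q, 1]$ then pins the maximum at the endpoint $p = 1-q$, producing the factor $(1-q)\cdot A$.

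For the tail, the plan is to apply H\"older's inequality with conjugate exponents $r = \tfrac{\alpha_{1}-1}{\alpha_{1}-\alpha}$ and $s = \tfrac{\alpha_{1}-1}{\alpha-1}$, which are conjugate and both at least one precisely because $1 < \alpha \le \alpha_{1}$. This separates the tail into
\[
\Bigl(\textstyle\sum_{y \neq j^{*}} P(y)\Bigr)^{1/r} \cdot \Bigl(\textstyle\sum_{y \neq j^{*}} P(y)^{\alpha_{1}} Q(y)^{1-\alpha_{1}}\Bigr)^{1/s}.
\]
The first factor is at most $q^{(\alpha_{1}-\alpha)/(\alpha_{1}-1)}$ by the assumption $\mathbb{P}[M(D) \neq j^{*}] \le q$, and the second is at most $e^{(\alpha-1)\varepsilon_{1}}$ after extending the inner sum to all of $\mathcal{Y}$ and invoking the $(\alpha_{1},\varepsilon_{1})$-RDP hypothesis. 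Rearranging $q^{(\alpha_{1}-\alpha)/(\alpha_{1}-1)} e^{(\alpha-1)\varepsilon_{1}}$ into $q \cdot B$ is a one-line exponent check. Summing the two pieces, taking $\log$, and dividing by $\alpha-1$ recovers the claimed $\varepsilon$.

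The main obstacle I anticipate is the monotonicity step for the $j^{*}$-term: bounding $p^{\alpha}(p')^{1-\alpha}$ by $(1-q)\cdot A$ requires joint treatment of $p$ and $p'$, which are linked through the Bernoulli RDP constraint, since the na\"\i{}ve pointwise bound $p \le 1$ is too weak to produce the $(1-q)^{\alpha}$ factor hidden in $(1-q)\cdot A$. The technical hypothesis $q \le \exp((\alpha_{2}-1)\varepsilon_{2})\big/\bigl(\tfrac{\alpha_{1}}{\alpha_{1}-1}\tfrac{\alpha_{2}}{\alpha_{2}-1}\bigr)^{\alpha_{2}}$ enters exactly here: it simultaneously guarantees that the denominator of $A$ stays strictly positive and that no interior critical point of $p \mapsto p^{\alpha}\bigl(p'(p)\bigr)^{1-\alpha}$ beats the endpoint $p = 1-q$. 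Once this step is settled, the remaining H\"older and exponent-arithmetic manipulations are routine.
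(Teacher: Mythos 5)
Note first that the paper you were compared against does not prove this lemma itself: it is imported background, stated with the pointer ``See \citep{pate_2018} for the proofs,'' so the relevant comparison is with the proof of Theorem~6 in Papernot \etal (2018). Against that proof, your skeleton is faithful: the split of $\sum_{y} P(y)^{\alpha} Q(y)^{1-\alpha}$ at the likely outcome $j^{*}$, the post-processing of the $(\alpha_{2}, \varepsilon_{2})$-RDP guarantee to the binary outcome $\{j^{*}, \neq j^{*}\}$ to get $\mathbb{P}[M(D') \neq j^{*}] \leq \left(q \cdot e^{\varepsilon_{2}}\right)^{(\alpha_{2}-1)/\alpha_{2}}$ (the denominator of $A$), and the H\"older step on the tail with conjugate exponents $\frac{\alpha_{1}-1}{\alpha_{1}-\alpha}$ and $\frac{\alpha_{1}-1}{\alpha-1}$ are exactly the reference's moves, and your exponent arithmetic for recovering $q \cdot B$ checks out.

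The gap is the step you yourself flag, and as written your treatment of it is internally inconsistent. Once you substitute $1-p \leq q$ into the Bernoulli inequality, the resulting lower bound on $p'$ no longer depends on $p$; then $p \mapsto p^{\alpha}(p')^{1-\alpha}$ is increasing in $p$ and is maximized at $p = 1$, yielding only $\bigl(1-(q e^{\varepsilon_{2}})^{(\alpha_{2}-1)/\alpha_{2}}\bigr)^{1-\alpha}$, which is strictly weaker than $(1-q) \cdot A$ --- so the endpoint-at-$p = 1-q$ claim cannot be salvaged in that form. The reference avoids the maximization entirely: it first proves the bound with $q$ taken to be the \emph{exact} probability $\mathbb{P}[M(D) \neq j^{*}]$, in which case the $j^{*}$-term \emph{equals} $(1-q)^{\alpha}(1-q')^{1-\alpha} \leq (1-q) \cdot A$ with no optimization, and only afterwards invokes a separate monotonicity lemma stating that $q \mapsto (1-q) \cdot A(q) + q \cdot B(q)$ is nondecreasing on the range permitted by the technical hypothesis; this is what licenses replacing the exact probability by the upper bound. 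That monotonicity computation --- the place where the constant $\left(\frac{\alpha_{1}}{\alpha_{1}-1} \cdot \frac{\alpha_{2}}{\alpha_{2}-1}\right)^{\alpha_{2}}$ actually arises --- is a nontrivial calculus lemma in the reference, not a routine check; moreover it is proved for the \emph{sum} of both terms (benefiting from the increasing tail $q \cdot B$), whereas your localized claim about the first term alone is a strictly stronger statement that you have not established. So: right decomposition, right tools, correct identification of where the hypothesis on $q$ enters, but the one genuinely technical ingredient of the theorem remains asserted rather than proved, and the surrounding setup would need the exact-$q$-then-monotonicity restructuring to go through.
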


This holds since according to \citep{pate_2018}, Prop. 7, for a GNMax aggregator $M$ with parameter $\sigma$ and for any class $j^* \in \mathcal{Y}$ the following statement applies:
\begin{align}
\label{eq:Probability of Likely Class}
    \mathbb{P} \left[ M(D) \neq j^* \right] \leq \frac{1}{2} \sum_{j \neq j^*} \mathrm{erfc} \left( \frac{n_{j^*} - n_j}{2\sigma} \right)
\end{align}
where $\mathrm{erfc}(\cdot)$ denotes the complementary error function defined by:
\begin{align}
    \mathrm{erfc}(a) \coloneqq \frac{2}{\sqrt{\pi}} \int_a^\infty e^{ -t^2} \mathrm{d}t \; .
\end{align}
See~\citep{pate_2018} for the proofs.

\section{The Vanishing-Mechanism}
\label{sec:Vanishing}
Our \emph{vanishing} mechanism keeps the independent partitioning of the original PATE approach and implements \personalized privacy by having teachers participate in more or fewer votings according to their training data points' privacy budget.
Therefore, in vanishing, data points with the same privacy budget have to be allocated to the same teacher (s).
We call data points with the same privacy budget a \emph{privacy group} $g_j$.
Teachers trained on privacy groups with higher privacy requirements (lower budgets) contribute to fewer votings, whereas teachers in lower-requirement groups contribute to more votings.
We implement vanishing by randomly sampling teachers for participating in given voting according to their data points' privacy requirements.
To be able to apply the same magnitude of privacy noise for each voting, we make sure that the number of teachers sampled per voting stays constant, see \Cref{alg:vanishing}.
The vanishing mechanism is also visualized in \Cref{fig:PATE Schemes}c.

\begin{algorithm}
\caption{Select teacher models for voting in the \textbf{vanishing} method.}\label{alg:vanishing}

\SetKwInput{KwData}{Input}
\KwData{Privacy budget $\varepsilon_j$ for each privacy group $g_j$, $j\in{1,...,G}$, each teacher model $t_i$.}
\KwResult{Participation $s_i$ for each teacher $t_i$.}
\For{Each teacher $t_i$}
{
    $s_i \gets 0$\Comment*[r]{Initialize participation}
}
$\varepsilon_{max} \gets \max_{j=1}^{G} \varepsilon_j$\;
\SetAlgoLined
\For{Each privacy group $g_j$}{
$S \gets $ randomly select $\frac{\varepsilon_j}{\varepsilon_{max}}$ teachers from group $g_j$\;
\For{Each teacher $t_i$ in $S$}
{
    $s_i \gets 1$\Comment*[r]{Update participation}
}
}
\end{algorithm}

We call the resulting aggregation method \emph{vanishing GNMax (vGNMax)}. Its vote count mechanism can be defined as follows:
\begin{df}[Vanishing Vote Count]
\label{Vanishing Vote Count}
Let $t_i \colon \mathcal{X} \rightarrow \mathcal{Y}$ be the $i$-th out of $k \in \mathbb{N}$ teachers.
Let further $N \in \mathbb{N}$ be the number of sensitive data points and $m_i \in \{0, 1\}^N$ a mapping that describes which points are learned by $t_i$.
Moreover, let $s_i \in \left\{0, 1 \right\}$ be the current participation of $t_i$.
The vanishing vote count $\mathring{n} \colon \mathcal{Y} \times \mathcal{X} \rightarrow \mathbb{N}$ of any class $j \in \mathcal{Y}$ for any unlabeled public data point $x \in \mathcal{X}$ is defined as
\begin{align}
    \mathring{n}_j \left( x \right) \coloneqq \sum\limits_{i=1}^k s_i \cdot \mathbbm{1} \left(t_i (x) = j \right) \;.
\end{align}
\end{df}

% In the vanishing algorithm, we select the same number of teachers for each voting (to use the same privacy noise $\sigma$, see~\Cref{df:Gaussian Mechanism}, for each voting).
% \textbf{Vanishing Algorithm:}
% \begin{enumerate}
%     \item Select the same number of teachers for each voting (to use the same privacy noise $\sigma$, see~\Cref{df:Gaussian Mechanism}, for each voting).
%     \item Probability of sampling a teacher depends on its privacy budget. The higher the privacy budget of the teacher, the more often the teacher participates in voting. 
%     \item The teachers with the highest privacy budgets participate in each voting while teachers from the lower privacy budgets participate with frequency that is proportional to the privacy budget. 
% \end{enumerate}
For example, if there are 2 groups of teachers where the higher privacy budget is twice the lower privacy budget, then the teachers with the higher privacy budget always vote while the teachers from the lower privacy budget participate only in half of the votings, and the number of teachers for given voting is 3/4 of the total number of teachers. %The vanishing mechanism modifies the selection of teachers for a given voting, as presented in~\Cref{alg:vanishing}.

\subsection*{Privacy Analysis for Vanishing}
\begin{prop}[Vanishing Sensitivity]
\label{Vanishing Sensitivity}
Let $d^{(i)} \in \mathcal{D}$ be a sensitive data point learned by teacher $t_i \in \{t_1, \ldots, t_k\}$. Let $s \coloneqq (s_1, \ldots, s_{k}) \in \{0, 1\}^k$ be the selection of teachers that participate in the current voting. Then, the individual sensitivity of the vanishing vote count, regarding $d^{(i)}$, is:
\begin{align}
    \Delta_{\mathrm{vanishing}, d}^{(i)} = s_i \text{.}
\end{align}

\end{prop}
\begin{proof}
In vanishing PATE, every data point only influences the vote of one teacher, which, in the worst-case results in two vote counts being changed.
However, in contrast to non-\personalized PATE, privacy is only spent if the teacher corresponding to $d^{(i)}$ participates in the current voting.
\end{proof}

Note that the vanishing mechanism could potentially benefit from the privacy amplification by subsampling~\citep{subsampling2018}. However, how to combine the data-dependent RDP, as used in PATE, with the subsampling mechanism remains an open problem~\citep{privateKnn2020} which is outside of the scope of this work. 

The vanishing approach does not change PATE hyperparameters ($\sigma$, $\sigma_T$, and $T$), in contrast to the upsampling method.
The sensitive data has to be grouped budget-wise before being provided to the teachers.
The votes are scaled so that the total sum of the votes is equal to the total number of teachers.

\section{Implementation Details of the \Personalized Variants of PATE}
\label{Implementation Details of the Personalized Variants}
This section contains details on the implementation of the \personalized GNMax variants which are left out in~\Cref{sec:Personalized Extensions for PATE,sec:Experimental Setup} for the sake of brevity.

\subsection*{Hyperparameter Search}
%\todo{Rework after having decided which experiments go into the paper!}
\label{Aligning Budgets and Expenditures}

The goal of the practical implementation of our variants of PATE is to ensure that their parameters align.
Thus, the optimization of PATE hyperparameters, \ie number of teachers $k$, noise standard deviation for consensus $\sigma_T$, threshold for consensus $T$, and the noise standard deviation for label creation $\sigma$, have to be adjusted so that the different variants of PATE are comparable.

We show how the parameters used in variants of \personalized PATE: numbers of duplications for upsampling, participation frequencies for vanishing, and teacher weights for weighting, influence the individual loose bound through individual sensitivities. For example, the teachers' weights for the weighting scheme translate directly to the teacher's sensitivities, which are set for the privacy analysis. The same holds for the duplication factor in upsampling and the participation frequencies for vanishing.

The privacy costs depend not only on the loose bound but also on the data-dependent tight bound and on the currently optimal RDP order(s). Therefore, it does not suffice to set the individual parameters or sensitivities proportional to the individual budgets. To find adequate parameters, we conduct experiments to analyze the relation between individual sensitivities and resulting privacy costs in~\Cref{fig:find_ratios}. 
The figure reports results for sensitivities, which are set according to the duplication, participation, and weighting factors of PATE.
The goal is to relate individual sensitivities by adjusting parameters so that all privacy budgets exhaust approximately at the same time.
To enable comparisons among the different variants, we describe the parameters by corresponding individual sensitivities.
We randomly divide the sensitive data into two equally sized groups, one with higher and one with lower individual sensitivity.
The parameters are adjusted so that a ratio of $c$ to $1$ is achieved for individual sensitivities with each $c \in \{2, \ldots, 9\}$. 
We conduct this experiment on the MNIST dataset and train ten different ensembles.
Each ensemble is then used for five different voting processes.
We perform $4,000$ votings in each process to compare the different cost growths over time.

\begin{figure*}[ht]
\makebox[\textwidth][c]{\includegraphics[width=0.7\textwidth, trim=3cm 0cm 3cm 1cm]{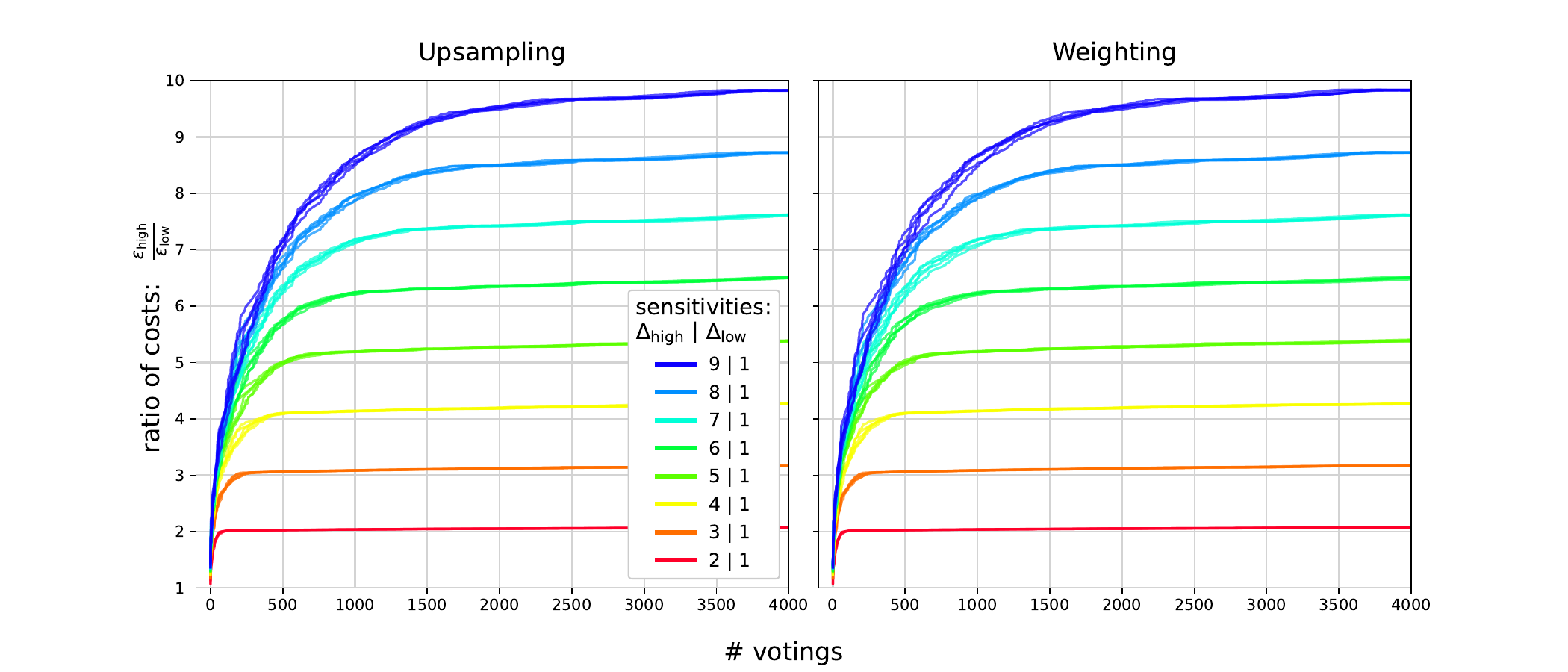}}
\caption{
\textbf{Tuning parameters for \personalized variants of PATE}.
Privacy cost relation between two equal-sized groups of sensitive data (high and low sensitivity) shown over $4,000$ votings on the MNIST dataset for both \personalized Confident-GNMax variants.
All costs are given in $(\varepsilon, \delta)$-DP for $\delta = 10^{-5}$.  Each of ten teacher ensembles is used to vote five times for all labels in the public dataset that is shuffled differently for each combination.
After some votings, the ratio of different costs almost remains constant at the ratio of corresponding sensitivities.
Note that the sensitivities shown are only proportions of sensitivities.
This means that for upsampling and weighting, the \emph{plain} sensitivities (that correspond to the number of duplications and the teachers' weights) are scaled while for vanishing, \emph{average} sensitivities (\ie participation frequencies), are scaled and depicted.
We use the figure to find what sensitivity values should be set for our two balanced (with the same number of data points) privacy groups. For a given ratio of $\varepsilon$-s and the desired number of votings (generated labels), we find the optimal ratios of sensitivities that directly correspond to the hyperparameters of our variants of PATE. For example, if the ratio of the privacy budgets $\frac{\varepsilon_{high}}{\varepsilon_{low}}$ is 3, then the corresponding ratio of sensitivities for the upsampling method should be 3 (the points from the lower privacy group are not duplicated - sampled once, while the points from the higher privacy group should be duplicated twice - with the total number of 3 points).  
}
\label{fig:find_ratios}
\end{figure*}

\subsection*{Details of Upsampling}
\label{Details of Upsampling}
Upsampling PATE extends the training data for teachers by duplicates.
\Cref{fig:find_ratios} shows that the ratio of privacy costs approaches the ratio of their corresponding budgets after some votings.
%the privacy costs of a data point having a higher privacy budget divided by the costs of another point having a lower budget approach a constant that equals the higher sensitivity divided by the lower sensitivity.
Therefore, numbers of duplicates should align to the relation of privacy budgets.
This can be achieved by initializing the numbers of duplicates as the different budgets and then scaling them up equally until each of them reaches an integer with some desired precision.
Note that a higher precision might lead to very high numbers of duplicates if not all budgets are multiples of each other as in our experiments.

\subsection*{Details of Vanishing}
\label{Details of Vanishing}
Vanishing PATE differentiates privacy on teacher-level by avoiding participation in some votings.
Therefore, sensitive data has to be grouped budget-wise and then be given to teachers \st all data points in a teacher have (almost) the same privacy budget.
Afterward, the participation frequencies have to be set according to the lowest budget of each teacher.
\Cref{fig:find_ratios} suggests that the frequencies corresponding to two different budgets should have a relation that is at least quadratic to the relation of their corresponding budgets.
In our experiments we used a relation that equals the relation of budgets to the power of four since the costs of data with different budgets are closer after a few votings before they approach constant ratios as for upsampling and weighting.
So we initialized frequencies to the corresponding budgets, squared them, and finally divided them by the highest frequency so that frequencies were probabilities and the highest one was $100\%$.

To be comparable to the other \personalized variants, the voting accuracy of vanishing PATE should be retained by decreasing the noise intensity according to the smaller number of voting teachers.
A weaker noise entails higher privacy costs for participating teachers' data.
Experiments showed that the privacy costs of all data is lower if the number of participating is stable over the whole voting process. Therefore, our implementation maintains a stable number of participating teachers by selecting random alternations that are changed periodically.
More precisely, randomly selected sets of teachers participate periodically in votings where the period aligns to their frequency and equal periods are shifted to achieve a stable number of participating teachers per voting.
After some votings, new sets of teachers with identical frequencies are randomly sampled to reduce the risk of biases that could be introduced into labels by cliques of teachers with similar knowledge.

\subsection*{Details of Weighting}
\label{Details of Weighting}
Following~\Cref{fig:find_ratios}, weights can be set to their corresponding budgets divided by the average budget.
Thus, all hyperparameters can remain unchanged while their optimization regarding the accuracy of teachers and voting still holds.

% \subsection*{Efficient Handling of \Personalized Privacy}
% \label{Efficient Handling}
% The consideration of different privacy budgets and costs increases the complexity of PATE both in time and space.
% To counteract this problem, privacy costs can be calculated and stored group-wise.
% That is all data points that have the same number of duplicates in the case of upsampling.
% In the vanishing approach, the data of all participating teachers have the same costs in a voting.
% However, costs have to be stored and accumulated teacher-wise.
% Usually, weighting PATE has the lowest complexity since all costs can be calculated, accumulated, and stored per group of data having the same budget.

\subsection*{Setting Parameters for \Personalized Variants} % These are indeed heuristics.
\label{sec:Adjustment of Privacy Personalization}
We show how to set the parameters of the \personalized variants of PATE so that different privacy budgets are exhausted at approximately the same time. 
\Cref{fig:find_ratios} visualizes the relation between the parameters of our \personalized Confident-GNMax variants and the resulting \personalized privacy costs according to tight bounds over time.
We observe that uGNMax and wGNMax behave very similarly and their cost ratios stay almost constant after a few votings. 
Contrary, vGNMax needs more votings to lower the gain of its cost ratio. 
For uGNMax and wGNMax, the cost ratio according to the tight bound seems to be approximately equal to the ratio of sensitivities, whereas the cost ratio approaches the square root of the ratio of sensitivities for vGNMax.
Therefore, in our experiments, we adjust the \personalization parameters (duplications, participation frequencies, and weights) so that the resulting sensitivities relate to the actual budgets.

\Eg let $\varepsilon_1, \varepsilon_2 \in \mathbb{R}_+$ be two DP budgets with $\varepsilon_2 = c \cdot \varepsilon_1$ for any $c > 1$.
Then, for the uGNMax, the duplications $u_2$ of points having the higher budget $\varepsilon_2$ are set to $u_2 \coloneqq c \cdot u_1$ where $u_1$ is the number of duplications for points having the lower budget.
For vGNMax, the participation frequency $s_1$ of teachers trained on points having the lower budget $\varepsilon_1$ is set to $s_1 \coloneqq \nicefrac{1}{c^2}$ while the frequency $s_2$ of teachers trained on points having the higher budget is always one\footnote{We set $s_1 \coloneqq \nicefrac{1}{c^4}$ for vGNMax in our experiments so that different privacy budgets exhausted approximately at the same time.}.
Finally, for the wGNMax, the weights of teachers are set to the corresponding budgets and then normalized so that the sum of all weights equals the number of teachers.
Thus, $w_1 \coloneqq \nicefrac{\varepsilon_1}{\overline{w}}$ and $w_2 \coloneqq \nicefrac{\varepsilon_2}{\overline{w}}$ where $\overline{w}$ is the average weight of all teachers.

%A comprehensive rationale on parameter settings for our \personalized variants as well as a description of further implementation details is left to~\Cref{Implementation Details of the \Personalized Variants}.

\section{Visualization of Methods}
We visualize the methods in~\Cref{fig:PATE Schemes}.
\renewcommand{\textfraction}{0.01} 
\begin{figure*}[h!]
\centering
 \includegraphics[width=\textwidth, trim={2cm 12.7cm 2cm 3.4cm},clip]{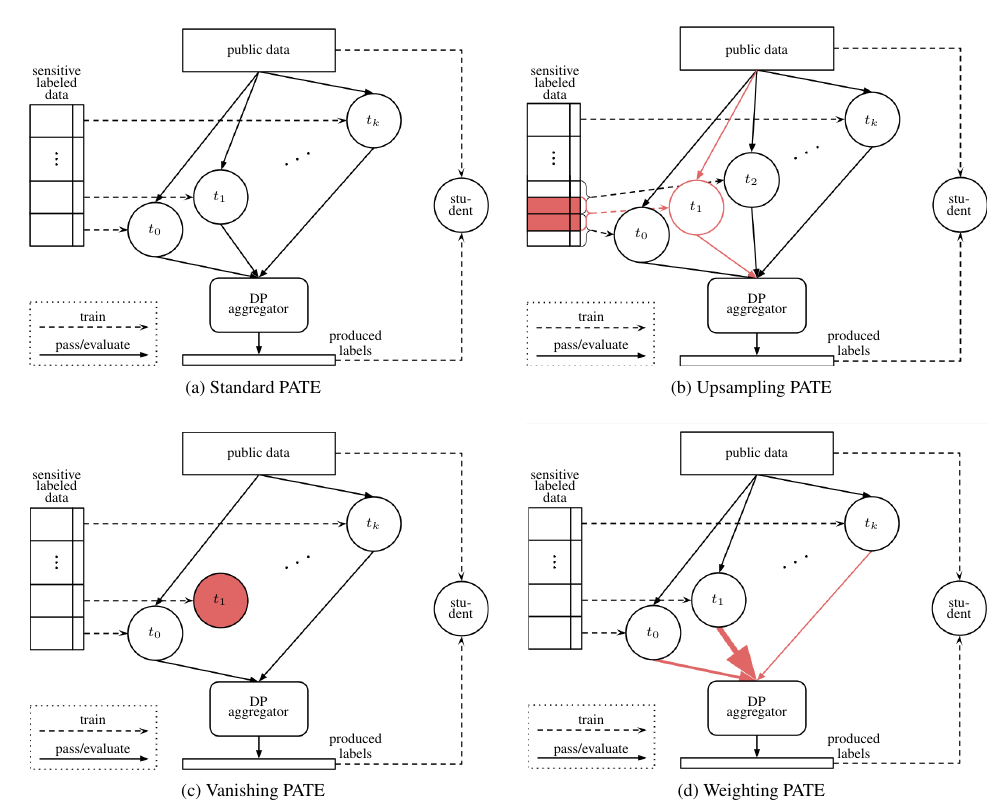}
\caption{
\textbf{PATE variants}.
Teachers $t_0, t_1, \ldots, t_k$ are trained on partitions of sensitive labeled data.
Afterward, public unlabeled data is given to the teacher ensemble whose votes are aggregated \st labels are produced.
Finally, the student is trained on the public data with produced labels.
The \personalized variants modify this procedure to individually adjust the influence of sensitive data on produced labels.
Sensitive data may be used to train multiple teachers (b).
Instead, teachers may avoid participating in some votings (c), or be weighted differently (d).
\label{fig:PATE Schemes}
}
\end{figure*}

\section{Additional Results}
\label{Additional Results}
Results of experiments on the Adult income dataset as well as more comprehensive results of MNIST experiments are presented on the following pages.

\begin{figure*}[ht]
\makebox[\textwidth][c]{\includegraphics[width=1\textwidth, trim=2.4cm 1.2cm 3.6cm 2cm]
{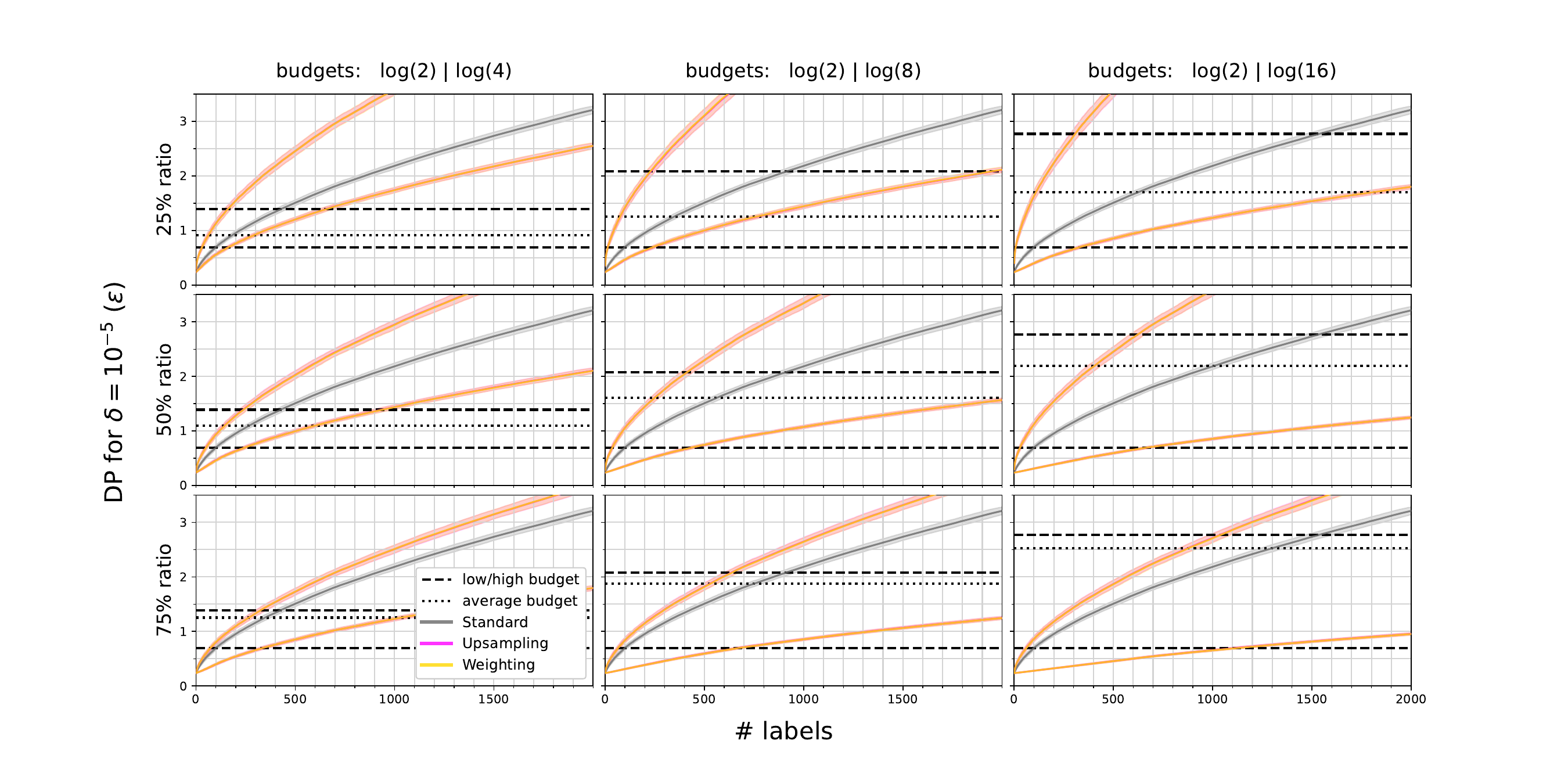}}
\caption{\textbf{Privacy cost history (MNIST)}. Costs for generating the first $2,000$ labels. Results are averaged over five voting processes by ten teacher ensembles, each for different budget distributions and GNMax variants upsampling\out~and weighting. Privacy costs and budgets are given in $(\varepsilon, \delta)$-DP for $\delta = 10^{-5}$. Ratios indicate the proportion of data with the higher budget. Costs are listed per group of data points sharing the same budget.}
\label{fig:cost_history_mnist_large}
\end{figure*}

\begin{figure*}[ht]
\makebox[\textwidth][c]{\includegraphics[width=1\textwidth, trim=2.4cm 1.2cm 3.6cm 2cm]
{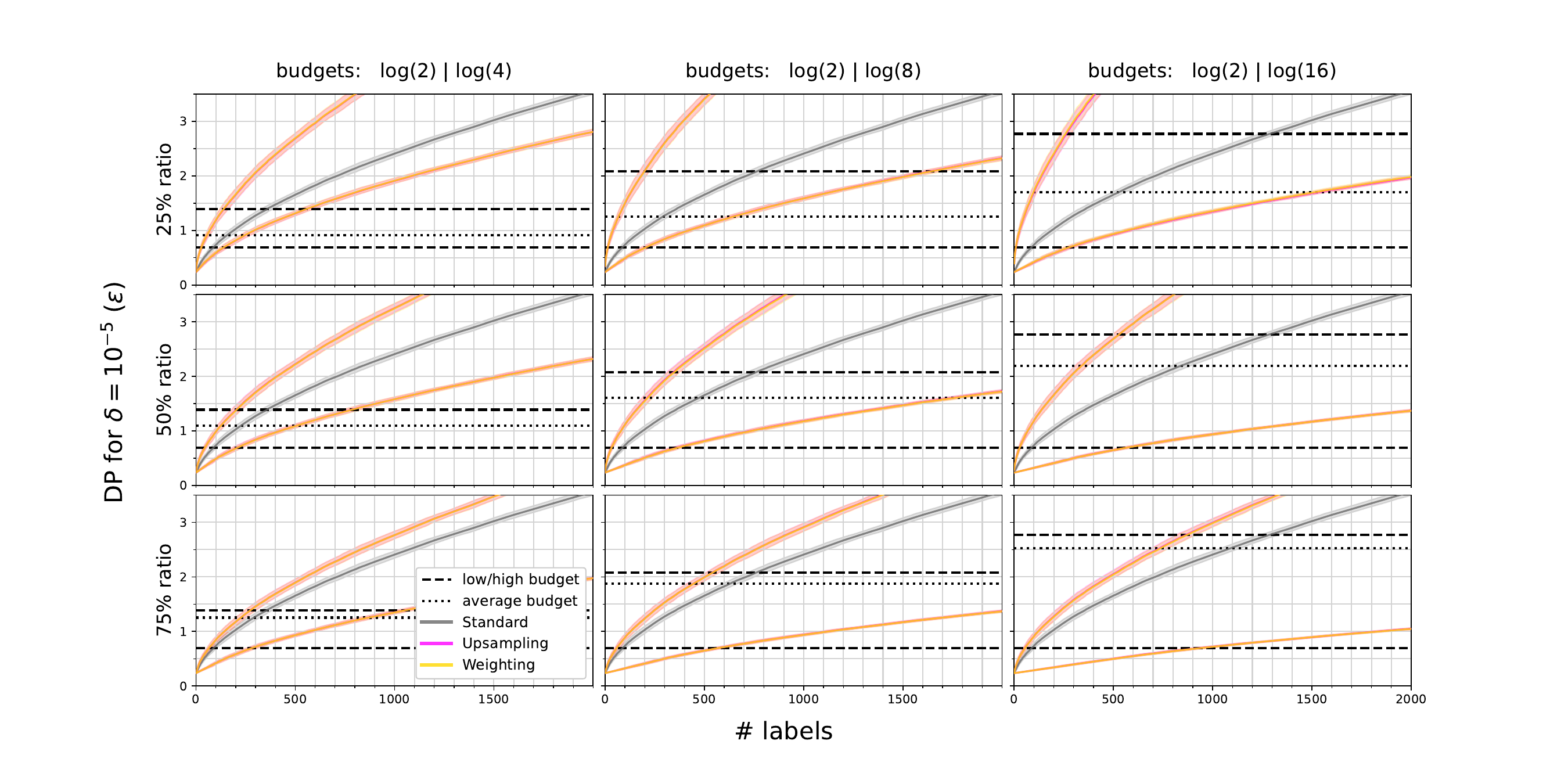}}
\caption{\textbf{Privacy cost history (Adult)}. Costs for generating the first $2,000$ labels. Results are averaged over five voting processes by ten teacher ensembles, each for different budget distributions and GNMax variants upsampling\out~and weighting. Privacy costs and budgets are given in $(\varepsilon, \delta)$-DP for $\delta = 10^{-5}$. Ratios indicate proportion of data with the data have the higher budget. Costs are given per group of points that share the same budget.}
\label{fig:cost_history_adult}
\end{figure*}

\colorlet{lightgray}{black}

\begin{table*}[ht]
\footnotesize
\centering
{
\begin{tabular}{|c|rr|rr|rr|}
%\toprule
\hline
higher & \multicolumn{2}{c|}{25\% ratio} & \multicolumn{2}{c|}{50\% ratio} & \multicolumn{2}{c|}{75\% ratio}\\
\cline{2-7}
budget in $\varepsilon$ & U  & W & U  & W & U  & W \\
\hline
$\log$ 4 & 140  & 139 & 202  & 203 & 272  & 273 \\
\hline
$\log$ 8 & 198  & 198 & 346  & 349 & 541  & 543 \\
\hline
$\log$ 16 & 264 & 259 & 530  & 530 & 868 & 872 \\
\hline
\textit{baseline} & \multicolumn{2}{c}{} & \multicolumn{2}{c}{\textit{88}} & \multicolumn{2}{c|}{}\\
\hline

\end{tabular}
}
\caption{
\textbf{Number of labels generated per \personalization (Adult)}. Results computed over five voting processes for different budget distributions for \underline{U}psampling and \underline{W}eighting.
Non-\personalized experiments with the lower group's privacy budget $\log 2$ serve as baselines.
}
\label{tab:n_labels_adult}
\end{table*}

% \begin{table}[ht]
% \scriptsize
% \centering
% {
% \resizebox{\textwidth}{!}{%
% \begin{tabular}{|c|rrr|rrr|rrr|rrr|}
% \hline
% higher & \multicolumn{3}{c|}{25\% ratio} & \multicolumn{3}{c|}{50\% ratio} & \multicolumn{3}{c|}{75\% ratio} & \multicolumn{3}{c|}{100\% ratio}\\
% budget in $\varepsilon$ & accuracy & precision & recall & accuracy & precision & recall & accuracy & precision & recall & accuracy & precision & recall\\
% \hline
% $\log$ 4 & 86.68 & 65.92 & 56.05 & 86.61 & 63.12 & 61.70 & 86.19 & 60.85 & 66.24 & 85.64 & 58.95 & 69.68\\
% \hline
% $\log$ 8 & 86.60 & 63.14 & 61.73 & 85.64 & 58.94 & 69.84 & 84.12 & 55.90 & 75.46 & 82.57 & 53.53 & 79.44\\
% \hline
% $\log$ 16 & 86.25 & 60.81 & 66.12 & 84.12 & 55.84 & 75.48 & 81.82 & 52.53 & 81.01 & 79.52 & 49.94 & 84.79\\
% \hline
% \color{lightgray} baselines &&& \multicolumn{2}{c}{\color{lightgray} accuracy 86.18} && \multicolumn{2}{c}{\color{lightgray} precision 69.41} && \multicolumn{2}{c}{\color{lightgray} recall 48.82} &&\\
% \hline
% \end{tabular}
% }
% }
% \caption{
% \textbf{Voting Accuracy, Teacher Precision, and Teacher Recall (in $\%$) per Skew \Personalization (Adult)}.
% Average voting accuracy, teacher precision, and teacher recall over five voting processes by ten teacher ensembles using \textbf{upsampling} each for different budget distributions (higher budget \& its ratio among minority data).
% Non-\personalized experiments serve as baseline.
% The best voting accuracy is achieved at the higher budget $\log 4$ with $25\%$ ratio.
% \label{tab:voting_acc_prec_rec_skew}
% }
% \end{table}

\begin{table*}[ht]
\centering
{
\footnotesize
% \begin{tabular}{|c|c|rrr|rrr|rrr|c|}
% \hline
% higher & \multicolumn{4}{c}{} & \multicolumn{3}{c}{accuracy ($\%$)} & \multicolumn{4}{c|}{}\\
% \cline{2-12}
% budget && \multicolumn{3}{c|}{25\% ratio} & \multicolumn{3}{c|}{50\% ratio} &
% \multicolumn{3}{c|}{75\% ratio} &\\
% in $\varepsilon$ && u & v & w & u & v & w & u & v & w &\\
% \color{lightgray} baselines & \color{lightgray} min & \multicolumn{3}{c|}{\color{lightgray} average} & \multicolumn{3}{c|}{\color{lightgray} average} & \multicolumn{3}{c|}{\color{lightgray} average} & \color{lightgray} max\\
% \hline
% $\log$ 4 && 88.68 & 60.18 & 88.21 & 90.44 & 78.83 & 90.78 & 91.94 & 88.58 & 91.70 &\\
% & \color{lightgray} 84.51 & \multicolumn{3}{c|}{\color{lightgray} 89.10} & \multicolumn{3}{c|}{\color{lightgray} 91.17} & \multicolumn{3}{c|}{\color{lightgray} 91.81} & \color{lightgray} 92.40\\
% \hline
% $\log$ 8 && 90.56 & 0 & 90.40 & 92.44 & 82.42 & 92.34 & 93.30 & 90.84 & 93.39 &\\
% & \color{lightgray} 84.51 & \multicolumn{3}{c|}{\color{lightgray} 91.81} & \multicolumn{3}{c|}{\color{lightgray} 93.06} & \multicolumn{3}{c|}{\color{lightgray} 93.71} & \color{lightgray} 94.01\\
% \hline
% $\log$ 16 && 91.57 & 0 & 91.39 & 93.56 & 85.85 & 93.27 & 94.31 & 92.78 & 94.28 &\\
% & \color{lightgray} 84.51 & \multicolumn{3}{c|}{\color{lightgray} 93.53} & \multicolumn{3}{c|}{\color{lightgray} 94.09} & \multicolumn{3}{c|}{\color{lightgray} 94.60} & \color{lightgray} 94.76\\
% \hline
% \end{tabular}

\begin{tabular}{|c|rr|rr|rr|}
%\toprule
\hline
higher & \multicolumn{2}{c|}{25\% ratio} & \multicolumn{2}{c|}{50\% ratio} & \multicolumn{2}{c|}{75\% ratio}\\
\cline{2-7}
budget in $\varepsilon$ & U  & W & U  & W & U  & W \\
\hline
$\log$ 4 & 92.32 & 93.26 & 92.52 & 93.08 & 94.38 & 93.70  \\
\hline
$\log$ 8 & 93.12 & 88.94 & 94.48  & 94.68 & 96.20 & 95.74 \\
\hline
$\log$ 16 & 93.96 & 90.24 & 96.38   & 96.32  & 96.90 & 96.60 \\
\hline
\textit{baseline} & \multicolumn{2}{c}{} & \multicolumn{2}{c}{\textit{88.70}} & \multicolumn{2}{c|}{}\\
\hline
\end{tabular}
}
\caption{
\textbf{Student accuracy per \personalization (MNIST)}.
Results for \underline{U}psampling and \underline{W}eighting based on the generated labels (see \Cref{tab:n_labels_mnist_short}).
Non-\personalized experiments with the lower group's privacy budget $\log 2$ serve as a baseline.}
\label{tab:accuracies_mnist}
\end{table*}

\begin{table*}[ht]
\footnotesize
\centering
{
% \begin{tabular}{|c|c|rrr|rrr|rrr|c|}
% \hline
% higher & \multicolumn{4}{c}{} & \multicolumn{3}{c}{accuracy ($\%$)} & \multicolumn{4}{c|}{}\\
% \cline{2-12}
% budget && \multicolumn{3}{c|}{25\% ratio} & \multicolumn{3}{c|}{50\% ratio} &
% \multicolumn{3}{c|}{75\% ratio} &\\
% in $\varepsilon$ && u & v & w & u & v & w & u & v & w &\\
% \color{lightgray} baselines & \color{lightgray} min & \multicolumn{3}{c|}{\color{lightgray} average} & \multicolumn{3}{c|}{\color{lightgray} average} & \multicolumn{3}{c|}{\color{lightgray} average} & \color{lightgray} max\\
% \hline
% $\log$ 4 && 81.02 & 76.17 & 80.87 & 81.76 & 78.62 & 81.76 & 82.16 & 80.70 & 82.26 &\\
% & \color{lightgray} 79.85 & \multicolumn{3}{c|}{\color{lightgray} 81.18} & \multicolumn{3}{c|}{\color{lightgray} 82.00} & \multicolumn{3}{c|}{\color{lightgray} 82.36} & \color{lightgray} 82.52\\
% \hline
% $\log$ 8 && 81.79 & 75.73 & 81.67 & 82.52 & 79.32 & 82.60 & 82.87 & 82.09 & 82.89 &\\
% & \color{lightgray} 79.85 & \multicolumn{3}{c|}{\color{lightgray} 82.36} & \multicolumn{3}{c|}{\color{lightgray} 82.81} & \multicolumn{3}{c|}{\color{lightgray} 83.01} & \color{lightgray} 83.02\\
% \hline
% $\log$ 16 && 82.30 & 75.31 & 82.25 & 82.82 & 79.97 & 82.84 & 83.07 & 82.72 & 83.04 &\\
% & \color{lightgray} 79.85 & \multicolumn{3}{c|}{\color{lightgray} 82.84} & \multicolumn{3}{c|}{\color{lightgray} 83.10} & \multicolumn{3}{c|}{\color{lightgray} 83.19} & \color{lightgray} 83.23\\
% \hline
% \end{tabular}

\begin{tabular}{|c|rr|rr|rr|}
%\toprule
\hline
higher & \multicolumn{2}{c|}{25\% ratio} & \multicolumn{2}{c|}{50\% ratio} & \multicolumn{2}{c|}{75\% ratio}\\
\cline{2-7}
budget in $\varepsilon$ & U  & W & U  & W & U  & W \\
\hline
$\log$ 4 & 81.02  & 80.87 & 81.76  & 81.76 & 82.16  & 82.26  \\
\hline
$\log$ 8 & 81.79   & 81.67 & 82.52 &  82.60 & 82.87 &  82.89 \\
\hline
$\log$ 16 & 82.30  & 82.25 & 82.82  & 82.84  & 83.07 &  83.04 \\
\hline
\textit{baseline} & \multicolumn{2}{c}{} &  \multicolumn{2}{c}{\textit{79.85}} & \multicolumn{2}{c|}{}\\
\hline
\end{tabular}
}
\caption{
\textbf{Student accuracy per \personalization (Adult)}.
Results depict the average accuracies, computed over five voting processes for different budget distributions for \underline{U}psampling and \underline{W}eighting.
Non-\personalized experiments with the lower group's privacy budget $\log 2$ serve as a baseline.
}
\label{tab:accuracies_adult}
\end{table*}

\begin{table*}[ht]
\footnotesize
%\centering
%\resizebox{\textwidth}{!}{%
%\new{
\begin{subtable}[t]{0.48\textwidth}
\begin{tabular}{|c|rr|rr|rr|rr|}
\hline
higher & \multicolumn{2}{c|}{25\% ratio} & \multicolumn{2}{c|}{50\% ratio} & \multicolumn{2}{c|}{75\% ratio} & \multicolumn{2}{c|}{100\% ratio}\\
budget in $\varepsilon$ & low & high & low & high & low & high & low & high\\
\hline
$\log$ 4 & 90.08 & 56.05 & 87.75 & 61.70 & 85.58 & 66.24 & 83.62 & 69.68\\
\hline
$\log$ 8 & 87.74 & 61.73 & 83.57 & 69.84 & 79.99 & 75.46 & 76.86 & 79.44\\
\hline
$\log$ 16 &  85.58 & 66.12 & 79.93 & 75.48 & 75.48 & 81.01 & 71.57 &  84.79\\
\hline
\textit{baseline} & \multicolumn{8}{c|}{(92.55, 48.82)} \\
%\color{lightgray} baselines &&& \multicolumn{2}{c}{\color{lightgray} accuracy 86.18} && \multicolumn{2}{c}{\color{lightgray} precision 69.41} && \multicolumn{2}{c}{\color{lightgray} recall 48.82} &&\\
\hline
\end{tabular}
%}
\caption{
\new{\textbf{Teacher Accuracy.}
}
\label{tab:teacher_acc_skew}
} %end of new
\end{subtable}
\hfill
\begin{subtable}[t]{0.48\textwidth}
\begin{tabular}{|c|rr|rr|rr|rr|}
\hline
higher & \multicolumn{2}{c|}{25\% ratio} & \multicolumn{2}{c|}{50\% ratio} & \multicolumn{2}{c|}{75\% ratio} & \multicolumn{2}{c|}{100\% ratio}\\
budget in $\varepsilon$ & low & high & low & high & low & high & low & high\\
\hline
$\log$ 4 & 95.21 & 55.99 & 93.02 & 64.11 & 91.24 & 68.93 & 88.59 & 74.11\\
\hline
$\log$ 8 & 93.19 & 64.11 & 88.71 & 73.91 & 84.42 & 81.14 & 81.41 & 85.23\\
\hline
$\log$ 16 & 91.02 & 68.55 & 84.47 & 81.36 & 79.99 & 86.49 & 75.32 &  90.22\\
\hline
\textit{baseline} & \multicolumn{8}{c|}{(97.13, 46.29)} \\
%\color{lightgray} baselines &&& \multicolumn{2}{c}{\color{lightgray} accuracy 86.18} && \multicolumn{2}{c}{\color{lightgray} precision 69.41} && \multicolumn{2}{c}{\color{lightgray} recall 48.82} &&\\
\hline
\end{tabular}
%}
\caption{
\new{\textbf{Voting Accuracy.}
}
\label{tab:voting_acc_skew}
} %end of new
\end{subtable}
\caption{
\new{\textbf{Per-group teacher and voting accuracy (in $\%$) for unbalanced \personalization (Adult).} Results reported for \emph{low}-income (majority) and \emph{high}-income (underrepresented) class as an average over 50 different students trained through five voting processes by ten teacher ensembles using \textbf{upsampling}.
The ratios specify what proportion of the underrepresented class was assigned the higher privacy budget. 
The remaining data obtained a privacy budget of $\log 2$.
Non-\personalized experiments with all data points having a privacy budget of $\log 2$ serve as the baseline.
As the proportion of data points from the  underrepresented class that obtain a higher privacy budget (or their respective budget) increases, we observe an increase in accuracy on this class.
At the same time, the accuracy on the majority class decreases.
}}
%}
\end{table*}

\begin{table*}[hb]
\centering
\footnotesize
\begin{tabular}{|c|r|r|r|r|}
\hline
higher & \multicolumn{4}{c|}{\# produced labels}\\
\cline{2-5}
budget in $\varepsilon$ & 25\% ratio & 50\% ratio & 75\% ratio & 100\% ratio \\
%\color{lightgray} baselines & \color{lightgray} average & \color{lightgray} average & \color{lightgray} average & \color{lightgray} average \\
\hline
$\log$ 4 & 90 & 95 & 101 & 109 \\
%& \color{lightgray} 88 & \color{lightgray} 106 & \color{lightgray} 123 & \color{lightgray} 140 & \color{lightgray} 157 & \color{lightgray} 354\\
\hline
$\log$ 8 & 93 & 108 & 132 & 162 \\
%& \color{lightgray} 88 & \color{lightgray} 140 & \color{lightgray} 193 & \color{lightgray} 241 & \color{lightgray} 289 & \color{lightgray} 763\\
\hline
$\log$ 16 & 96 & 129 & 172 & 225 \\
%& \color{lightgray} 88 & \color{lightgray} 209 & \color{lightgray} 320 & \color{lightgray} 428 & \color{lightgray} 524 & \color{lightgray} 1,288\\
\hline
\textit{baseline} & \multicolumn{4}{c|}{\textit{88}} \\
\hline
\end{tabular}
\caption{\textbf{Labels generated per unbalanced \personalization (Adult).} Results depict the average over five voting processes by ten teacher ensembles using upsampling. Ratios indicate the proportion of the underrepresented class with the indicated higher budgets. 
The remaining data receives a privacy budget of $\log 2$.
Non-\personalized experiments with a uniform privacy budget of $\log 2$  serve as a baseline.}
\label{tab:n_labels_skew}
\end{table*}

\end{document}